\newtheorem{axiom}{Axiom}
\newtheorem{assumption}[axiom]{Assumption}
\newtheorem{theorem}{Theorem}[section]
\newtheorem{lemma}[theorem]{Lemma}
\newtheorem{proposition}[theorem]{Proposition}
\newtheorem*{proposition*}{Proposition}
\newtheorem*{theorem*}{Theorem}
\newtheorem*{lemma*}{Lemma}
\def\E{{\mathbb E}}
\def\Qo{{Q^{\pi_o}}}
\def\Qod{{Q^{\pi_o}_d}}
\def\hQo{\hat{Q}^{\pi_o}}
\def\hQod{\hat{Q}^{\pi_o}_d}
\def\S{{\mathcal S}}
\def\A{{\mathcal A}}
\def\BState{\State\hskip-\ALG@thistlm}
\newcommand*\bigcdot{\mathpalette\bigcdot@{.5}}
\newcommand{\ignore}[1]{}
\newlength{\commentindent}
\renewcommand{\algorithmiccomment}[1]{\unskip\hfill\makebox[\commentindent][l]{//~#1}\par}
\LetLtxMacro{\oldalgorithmic}{\algorithmic}
\renewcommand{\algorithmic}[1][0]{%
  \oldalgorithmic[#1]%
  \renewcommand{\ALC@com}[1]{%
    \ifnum\pdfstrcmp{##1}{default}=0\else\algorithmiccomment{##1}\fi}%
}
\DeclareMathOperator*{\argmax}{arg\,max}
\title{Improve Agents without Retraining: Parallel Tree Search with Off-Policy Correction}
\author{%
  Assaf Hallak \thanks{Equal contribution (random order)} \\
  NVIDIA Research\\
  \texttt{ahallak@nvidia.com} \\
  \And
  Gal Dalal \footnotemark[1] \\
  NVIDIA Research \\
  \texttt{gdalal@nvidia.com} \\
  \AND
  Steven Dalton \\
  NVIDIA Research \\
  \texttt{sdalton@nvidia.com} \\
   \And
  Iuri Frosio \\
  NVIDIA Research \\
  \texttt{ifrosio@nvidia.com} \\
   \And
   Shie Mannor \\
   NVIDIA Research \\
   \texttt{smannor@nvidia.com} \\
   \And
   Gal Chechik \\
   NVIDIA Research \\
   \texttt{gchechik@nvidia.com} \\
}
\begin{document}

\maketitle

\begin{abstract}
Tree Search (TS) is crucial to some of the most influential successes in reinforcement learning. Here, we tackle two major challenges with TS that limit its usability: \textit{distribution shift} and \textit{scalability}. We first discover and analyze a counter-intuitive phenomenon: action selection through TS and a pre-trained value function often leads to lower performance compared to the original pre-trained agent, even when having access to the exact state and reward in future steps. We show this is due to a distribution shift to areas where value estimates are highly inaccurate and analyze this effect using Extreme Value theory. To overcome this problem, we introduce a novel off-policy correction term that accounts for the mismatch between the pre-trained value and its corresponding TS policy by penalizing under-sampled trajectories. We prove that our correction eliminates the above mismatch and bound the probability of sub-optimal action selection. Our correction significantly improves pre-trained Rainbow agents without any further training, often more than doubling their scores on Atari games. Next, we address the scalability issue given by the computational complexity of exhaustive TS that scales exponentially with the tree depth. We introduce Batch-BFS: a GPU breadth-first search that advances all nodes in each depth of the tree simultaneously. Batch-BFS reduces runtime by two orders of magnitude and, beyond inference, enables also training with TS of depths that were not feasible before. We train DQN agents from scratch using TS and show improvement in several Atari games compared to both the original DQN and the more advanced Rainbow. 

The code for BCTS can be found in \url{https://github.com/NVlabs/bcts}.
\end{abstract}

\setcounter{footnote}{0} 
\section{Introduction} \label{sec:intro}
Tree search (TS) is a fundamental component of Reinforcement Learning (RL)~\cite{sutton2018reinforcement} used in some of the most successful RL systems \cite{schrittwieser2020mastering, silver2016mastering}.
For instance, Monte-Carlo TS (MCTS) \cite{coulom2006efficient} achieved superhuman performance in board games like Go~\cite{silver2016mastering}, 
Chess~\cite{silver2017mastering}, and Bridge~\cite{browne2012survey}. 
MCTS gradually unfolds the tree by adding nodes and visitation counts online and storing them in memory for future traversals. This paradigm is suitable for discrete state-spaces where counts are aggregated across multiple iterations as the tree is built node-by-node
, but less suitable for continuous state-spaces or image-based domains like robotics and autonomous driving \footnote{In the case of continuous state-spaces or image-based domains, MCTS can be used by reconstructing trajectories from action sequences only in deterministic environments. Also, MCTS requires a fixed initial state because the root state has to either exist and be found in the MCTS tree or alternatively a new tree has to be built and reiterated for that initial state. }. For the same reason, MCTS cannot be applied to improve pre-trained agents without collecting their visitation statistics in training iterations. 

Instead, in this work, we conduct the TS ``on-demand'' by expanding the tree up to a given depth at each state. Our approach handles continuous and large state-spaces like images without requiring any memorization. This on-demand TS can be performed both at training or inference time. Here, we focus our attention on the second case, which leads to score improvement without any re-training. This allows one to better utilize existing pre-trained agents even without having the ability or resources to train them. For example, a single AlphaGo training run is estimated to cost 35 million USD \cite{alphagocost}. In other cases, even when compute budget is not a limitation, the setup itself makes training inaccessible. For example, when models are distributed to end-clients with too few computational resources to train agents in their local custom environments. 

We run TS for inference as follows. For action selection, we feed the states at the leaves of the spanned tree to the pre-trained value function. We then choose the action at the root according to the branch with the highest discounted sum of rewards and value at the leaves. Our approach instantly improves the scores of agents that were already trained for long periods (see Sec.~\ref{sec: inference experiments}). Often, such improvement is possible because the value function is not fully realizable with a function approximator, e.g., a deep neural network; TS can then overcome the limitation of the model. In practice, TS requires access to a forward model that is fed with actions to advance states and produce rewards. Here, we build on the recently published CuLE \cite{dalton2019accelerating} -- an Atari emulator that runs on GPU. This allows us to isolate the fundamental properties of TS without the added noise of learned models such as those described in Sec.\ref{sec: related work}.

Performing TS on-demand has many benefits, but it also faces limitations. We identify and analyze two major obstacles: distribution shift and scalability.

First, we report a counter-intuitive phenomenon when applying TS to pre-trained agents. As TS looks into the future, thus utilizing more information from the environment, one might expect that searching deeper should yield better scores. Surprisingly, we find that in many cases, the opposite happens: action selection based on vanilla TS can drastically impair performance. We show that performance deteriorates due to a distribution shift from the original pre-trained policy to its corresponding tree-based policy. We analyze this phenomenon by quantifying the probability of choosing a sub-optimal action when the value function error is high. This occurs because for values of out-of-distribution states, larger variance translates to a larger bias of the maximum. Our analysis leads to a simple, computationally effective off-policy correction term based on the Bellman error. We refer to the resulting TS as the Bellman-Corrected Tree-Search (BCTS) algorithm. BCTS yields monotonically improving scores as the tree depth increases. In several Atari games, BCTS even more than doubles the scores of pre-trained Rainbow agents \cite{hessel2018rainbow}. 

The second limitation is scalability: the tree grows exponentially with its depth, making the search process computationally intensive and limiting the horizon of forward-simulation steps.
To overcome this limitation, we propose Batch-BFS: a parallel GPU adaptation of Breadth-First Search (BFS), which brings the runtime down to a practical regime. We measured orders-of-magnitude speed-up compared to alternative approaches. 
Thus, in addition to improving inference, it also enables training tree-based agents in the same order of training time without a tree. By combining Batch-BFS with DQN \cite{mnih2013playing} and training it with multiple depths, we achieve performance comparable or superior to Rainbow -- one of the highest-scoring variants of the DQN-based algorithm family.

\textbf{Our Contributions.} (1)~We identify and analyze a distribution-shift that impairs post-training TS. (2)~We introduce a correction mechanism and use it to devise BCTS: an efficient algorithm that improves pre-trained agents, often doubling their scores or more.
(3)~We create Batch-BFS, an efficient TS on GPU. (4)~We use Batch-BFS to train tree-based DQN agents and obtain higher scores than DQN and Rainbow.


\section{Preliminaries}\label{sec:preliminaries}
Our framework is an infinite-horizon discounted Markov Decision Process (MDP) \cite{puterman2014markov}. An MDP is defined as the 5-tuple $(\mathcal{S}, \A,P,r,\gamma)$, where ${\mathcal S}$ is a state space, ${\mathcal A}$ is a finite action space, $P(s'|s,a)$ is a transition kernel, $r(s,a)$ is a reward function,  $\gamma\in(0,1)$ is a discount factor. At each step $t=0,1,\dots,$ the agent observes the last state $s_t$, performs an action $a_t$ and receives a reward $r_t$. The next state is then sampled by $s_{t+1} \sim P(\cdot | s_t, a_t)$. For brevity, we denote $A:=|\A|.$

Let $\pi: \mathcal{S}\rightarrow \A$ be a stationary policy. Let $Q^\pi: \S\times\A \rightarrow \mathbb{R}$ be the state-action value of a policy $\pi,$ defined in state $s$ as $Q^\pi(s, a) \equiv \mathbb{E}^\pi \left[ \sum_{t=0}^\infty\gamma^tr(s_t,\pi(s_t)) \big| s_0=s, a_0=a \right]$, where $\mathbb{E}^\pi$ denotes expectation w.r.t. the distribution induced by $\pi$. Our goal is to find a policy $\pi^*$ yielding the optimal value $Q^*$ such that $Q^*(s,a) = \max_\pi r(s,a) + \gamma \mathbb{E}_{s'\sim P(\cdot | s, a)} \max_{a'} Q^\pi (s', a')$. It is well known that 
\begin{equation*}
    Q^*(s, a) = r(s,a) + \gamma \mathbb{E}_{s'\sim P(\cdot | s, a)} \max_{a'} Q^* (s', a'), \quad\quad\quad \pi^*(s) = \argmax_a Q^*(s,a).
\end{equation*}

\textbf{Vanilla tree search.} To ease notations and make the results concise, we limit the analysis to deterministic transitions\footnote{The Atari environments we experiment on here are indeed close to deterministic. Their source of randomness is the usage of a random number of initial noop actions \cite{mnih2015human}.}, i.e., an action sequence $(a_0,\dots,a_{d-1})$, starting at $s_0$ leads to a corresponding trajectory $(s_0,\dots,s_d).$ Nonetheless, the results can be extended to a stochastic setup by working with the marginal probability over the trajectory.
Then, for a policy $\pi_o,$ let the $d$-step Q-function
\begin{equation}\label{eq:dStepQ}
    \Qod(s, a) =  \max_{(a_k)_{k=1}^d \in \A} \Bigg[ \sum_{t=0}^{d-1} \gamma^t r(s_t, a_t)  + \gamma^d \Qo(s_d, a_d) \Bigg]_{s_0=s, a_0=a},
\end{equation}
and similarly let $\hQod(s,a)$ be the d-step Q-function estimator that uses an estimated Q-function $\hQo$ instead of $\Qo.$ Finally, denote by $\pi_d$ the $d$-step greedy policy 
\begin{equation}
    \label{def: pi d}
    \pi_d(s) := \arg\max_{a \in \A} \hQod(s, a).
\end{equation}

\section{Solving the Tree Search Distribution Shift} \label{sec:Inference}
In this section, we show how to leverage TS to improve a pre-trained policy.
%
We start by demonstrating an intriguing phenomenon: The quality of agents degrades when using vanilla TS. We then analyze the problem and devise a solution. 
The core idea of our approach is to distinguish between actions that are truly good, and those that are within the range of noise. By quantifying the noise using the Bellman error and problem parameters, we find the exact debiasing that yields the optimal signal-to-noise separation.

%
%



\subsection{Performance degradation with vanilla tree search}\label{subsec:degredation}
We focus on applying TS at inference time, 
without learning.
We begin with a simple experiment that quantifies the benefit of using TS given a pre-trained policy.

A TS policy has access to future states and rewards and, by definition, is optimal when the tree depth goes to infinity.
Hence, intuitively, one may expect a TS policy to improve upon $\pi_o$ for finite depths as well. 
To test this, we load Rainbow agents $\hQo$, pre-trained on 50M frames; they are publicly available in~\cite{RainbowPretrained} and achieve superhuman scores as in~\cite{hessel2018rainbow}.
We use them to test TS policies $\pi_d$ with multiple depths $d$ on several Atari benchmarks.
Surprisingly, the results (red curves in Fig.~\ref{fig:corrected}, Sec.~\ref{sec: inference experiments}) show that TS reduces the total reward, sometimes to scores of random policies, in various games and depths. The drop is particularly severe for TS policies with $d=1$ --- a fact later explained by our analysis in Thm.~\ref{thm: approx prob bound}.


\begin{figure}
\centering
\includegraphics[scale=0.47]{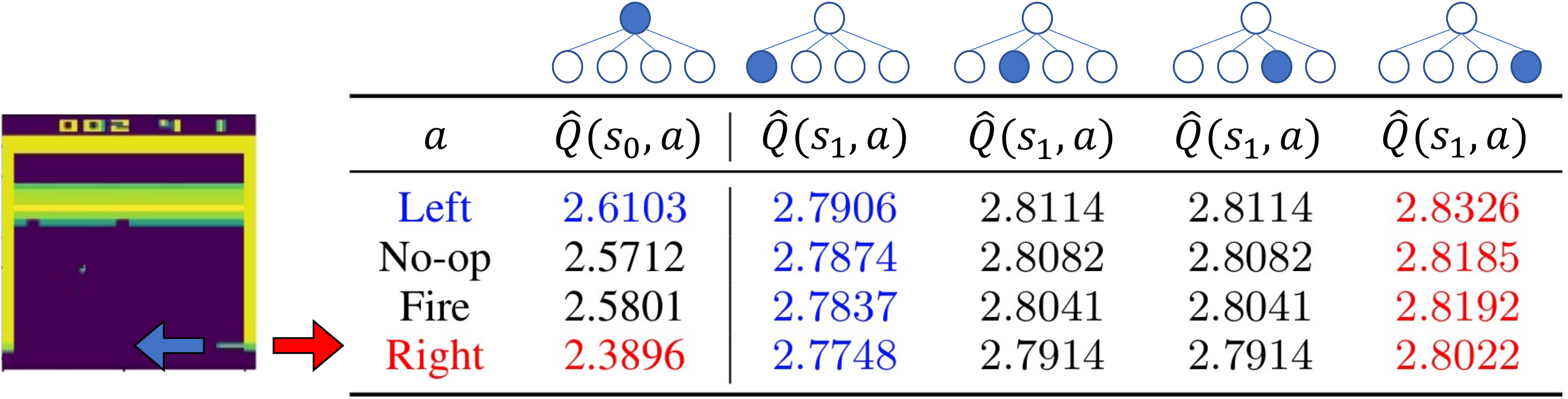}
    \caption{\textbf{A failure of vanilla tree search.} \textit{Left}: An Atari Breakout frame. \textit{Right}: Q-values of TS for the frame on the left. Rows correspond to the action taken at the considered depth, which is $d=0$ for the first column and $d=1$ for the four others. The action at the root of the tree is color-coded: Red for `Right', and blue for `Left'.}

    \label{fig:breakout_overfit}
\end{figure}

We find the reason for this performance drop to be the poor generalization of the value function to states outside the stationary $\pi_o$'s distribution.
Fig.~\ref{fig:breakout_overfit} shows a typical, bad action selection in Atari Breakout by a depth-1 TS. The table on the right reports the estimated Q-values of the root state (first column) and of every state at depth 1 (last four columns). Since the ball is dropping, `Left' is the optimal action. Indeed, this corresponds with the Q-values at the root. However, at depth 1, the Q-values of the future state that corresponds to choosing `Left' (second column) are the lowest among all depth-1 future states. Subsequently, the depth-1 TS policy selects `Right'.

During training, towards convergence, the trained policy mostly selects `Left' while other actions are rarely sampled. Therefore, expanding the tree at inference time generates states that have been hardly observed during training and are consequently characterized by inaccurate Q-value estimates.
In the case of Fig.~\ref{fig:breakout_overfit}, the Q-value for `Left' should indeed be low because the agent is about to lose the game. As for the other, poorly sampled states, regression towards a higher mean leads to over-estimated Q-values.
Similar poor generalization has been observed in previous studies  \cite{farebrother2018generalization} and is interpreted as an off-policy distribution shift \cite{munos2016safe}.

Beyond the anecdotal example in Fig.~\ref{fig:breakout_overfit}, additional evidence supports our interpretation regarding the distribution shift. We first consider the error in the value estimate captured by the Bellman error minimized in training.
We compare the average Bellman error of the action chosen by $\pi_o$ to all other actions at the tree root. When averaging over $200$ episodes, we find that the error for actions chosen by $\pi_o$ is consistently lower than for other actions: $\times1.5$ lower for Breakout, and $\times2$ for Frostbite.
We also measure the off-policy distribution shift between $\pi_o$ and the TS policy (that utilizes $\pi_o$) by counting disagreements on actions between the two policies. In Breakout, $\pi_o$ and $\pi_1$ agreed only in $18\%$ of the states; in Frostbite, the agreement is only $1.96\%$. Such a level of disagreement between a well-trained agent and its one-step look-ahead extension is surprising. On the other hand, it accounts for the drastic drop in performance when applying TS, especially in Frostbite  (Fig.~\ref{fig:corrected}).



\subsection{Analysis of the degradation}
We analyze the decision process of a policy $\pi_d$, given a pre-trained value function estimator, in a probabilistic setting. Our analysis leads to a simple correction term given at the end of the section. We also show how to compute this correction from the TS.  
Formally, we are given a policy represented as a value function $\hQo,$ which we feed to the $d$-step greedy policy \eqref{def: pi d} for each action selection. Policy training is a stochastic process due to random start states, exploration, replay buffer sampling, etc. The value estimator $\hQo$ can thus be regarded as a random variable, with an expectation that is a deterministic function $\Qo$ with a corresponding policy $\pi_o,$ where `$o$' stands for `original'.  

In short, we bound the probability that a sub-optimal action falsely appears more attractive than the optimal one. We thus wish to conclude with high probability whether $a_0=\argmax_a\hQod(s,a)$ is indeed optimal, i.e., $\Qod(s,a_0)\geq \Qod(s,a)~\forall a\in \A$. 

As we have shown in Sec.~\ref{subsec:degredation} that states belonging to trajectories that follow $\pi_o$ have lower value estimation noise, we model this effect via the following two assumptions.
Here, we denote by $t=0$ the time when an agent acts and not as the time step of the episode.
 \begin{assumption}\label{ass:NormQ}
     Let $\sigma_o, \sigma_e \in \mathbb{R}^+$ s.t. $0 < \sigma_o < \sigma_e$. For action sequence $(a_0, \dots, a_{d-1})$ and corresponding state trajectory $(s_0,\dots, s_d),$  
     \begin{equation*}
     \hQo(s_d, a_d) \sim
     \begin{cases}
    \mathcal{N}(\Qo(s_d, a_d), \sigma^2_o) &\mbox{ if } a_0=\pi_o(s_0) \\
    \mathcal{N}(\Qo(s_d, a_d), \sigma^2_e) &\mbox{ otherwise}.
    \end{cases}
     \end{equation*}
     
     
 \end{assumption}
Assumption~\ref{ass:NormQ} presumes that a different choice of the first action $a_0$ yields a different last state $s_d$. This is commonly the case for environments with large state spaces, especially when stacking observations as done in Atari. While assuming a normal distribution is simplistic, it still captures the essence of the search process. Regarding the expectation, recall that $\pi_o$ is originally obtained from the previous training stage
via gradient descent with a symmetric loss function ${\mathcal L}.$  Due to the symmetric loss, the estimate $\hQo_\theta$ is unbiased, i.e., $\E[\hQo_\theta] = \Qo.$ Regarding the variance, towards convergence, the loss is computed on replay buffer samples generated according to the stationary distribution of $\pi_o$. The estimate of the value function for states outside the stationary distribution is consequently characterized by a higher variance, i.e. $\sigma_o <\sigma_e$. We also show that this separation of variance occurs in the data, as detailed in the last paragraph of Sec.~\ref{subsec:degredation}.


 
After conditioning the variance on whether $\pi_o$ was followed, we similarly split the respective sub-trees.
To split, we assume the cumulative reward along the tree and values at the leaves depend only on (i) the root state and (ii) whether $\pi_o$ was selected at that state. 
 
\begin{assumption}\label{ass:simpleComparison}
For action sequence $(a_0, \dots a_{d-1})$ and corresponding trajectory $(s_0, \dots, s_d),$ 
 \begin{equation*}
  \sum_{t=0}^{d-1} \gamma^t r(s_t, a_t) = 
    \begin{cases}
    R_o(s_0) \mbox{ if } a_0=\pi_o(s_0) \\
    R_e(s_0) \mbox{ otherwise},
    \end{cases}
    \Qo(s_d, a_d) = 
    \begin{cases}
    \mu_o(s_0) \mbox{ if } a_0=\pi_o(s_0) \\
    \mu_e(s_0) \mbox{ otherwise},
    \end{cases}
\end{equation*}
with $R_o, R_e, \mu_o, \mu_e$
being functions of $s$.
\end{assumption}
 Assumption~\ref{ass:simpleComparison} could be replaced with a more detailed consideration of each trajectory, but we make it for simplicity.
%
This assumption considers the worst-case scenario: the rewards are unhelpful in determining the optimal policy, and all leaves are equally likely to mislead the $d$-step greedy policy. That is, there is no additional signal to separate between the $A^d$ leaves besides the initial action.

Assuming from now on that Assumptions~\ref{ass:NormQ} and \ref{ass:simpleComparison} hold, we can now explicitly express the distribution of the maximal value among the leaves using Generalized Extreme Value (GEV) theory \cite{coles2001introduction}. 



\begin{lemma}\label{lem:Qdists}
The estimates $\hQod(s, \pi_o(s))$ and $\max_{a\ne\pi_o(s)}\hQod(s, a)$ are GEV-distributed with parameters given in Appendix~\ref{sec: GEV lemma proof}.
\end{lemma}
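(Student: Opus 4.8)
The plan is to compute the distribution of each of the two random quantities by tracking how the Gaussian noise in the leaf value estimates propagates through the $\max$ operations defining $\hQod$. I would start from the definition \eqref{eq:dStepQ} of $\hQod$ and substitute in Assumptions~\ref{ass:NormQ} and~\ref{ass:simpleComparison}. For the on-policy term $\hQod(s,\pi_o(s))$: conditioning on $a_0=\pi_o(s)$, Assumption~\ref{ass:simpleComparison} forces the cumulative discounted reward to be the constant $R_o(s)$ and the true value at every leaf to be $\mu_o(s)$, so the only randomness comes from the $\gamma^d\hQo(s_d,a_d)$ term, which by Assumption~\ref{ass:NormQ} is $\mathcal N(\mu_o(s),\sigma_o^2)$ at each of the $A^{d-1}$ leaves reachable with $a_0=\pi_o(s)$ fixed. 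Hence $\hQod(s,\pi_o(s)) = R_o(s) + \gamma^d \max_{j=1,\dots,A^{d-1}} X_j$ where the $X_j$ are (treated as) i.i.d.\ $\mathcal N(\mu_o(s),\sigma_o^2)$. Symmetrically, $\max_{a\ne\pi_o(s)}\hQod(s,a) = R_e(s) + \gamma^d \max_{j=1,\dots,A^d - A^{d-1}} Y_j$ with $Y_j$ i.i.d.\ $\mathcal N(\mu_e(s),\sigma_e^2)$ — here the count $A^d-A^{d-1}$ is the number of leaves whose root action differs from $\pi_o(s)$.

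The second step is to invoke the Fisher–Tippett–Gnedenko theorem from GEV theory~\cite{coles2001introduction}: the maximum of $n$ i.i.d.\ Gaussians, after the standard affine normalization, converges to a Gumbel (type-I GEV) law, and for finite $n$ one uses the block-maximum approximation that treats $\max_j X_j$ as exactly GEV-distributed with shape parameter $0$ and with location and scale read off from the Gaussian tail via the standard normalizing constants $b_n = \Phi^{-1}(1-1/n)$ and $a_n = 1/(n\,\phi(b_n))$ (scaled by $\sigma_o$ or $\sigma_e$ and shifted by $\mu_o(s)$ or $\mu_e(s)$ respectively). Since an affine transformation of a GEV variable is again GEV (with location and scale transformed accordingly, shape unchanged), adding the constants $R_o(s)$ or $R_e(s)$ and multiplying by $\gamma^d$ preserves the GEV form. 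This yields explicit location, scale, and shape parameters for both $\hQod(s,\pi_o(s))$ and $\max_{a\ne\pi_o(s)}\hQod(s,a)$, which I would collect in Appendix~\ref{sec: GEV lemma proof}.

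I expect the main obstacle to be conceptual honesty rather than computational difficulty: the leaf estimates $\{X_j\}$ (and $\{Y_j\}$) are not genuinely independent — sibling leaves in the tree share prefixes of their trajectories — and the GEV limit is asymptotic in the block size $n$, whereas $n = A^{d-1}$ is modest for small depth. So the step that needs the most care is justifying that treating the leaves as i.i.d.\ and applying the finite-$n$ GEV approximation is legitimate for the purposes of the subsequent bound in Thm.~\ref{thm: approx prob bound}; I would state this explicitly as the modeling assumption already baked into Assumptions~\ref{ass:NormQ}–\ref{ass:simpleComparison} and note that the approximation is standard and tight in the relevant tail regime. The remaining work — plugging Gaussian normalizing constants into the GEV location/scale and carrying the affine factors $\gamma^d$, $R_o(s)$, $R_e(s)$ through — is routine and deferred to the appendix.
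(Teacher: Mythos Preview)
Your proposal is correct and follows essentially the same route as the paper: decompose $\hQod$ using Assumptions~\ref{ass:NormQ}--\ref{ass:simpleComparison} into a deterministic reward plus $\gamma^d$ times the max of $A^{d-1}$ (resp.\ $A^d-A^{d-1}$) i.i.d.\ Gaussians, then invoke the Gumbel limit from \cite{coles2001introduction}. The only cosmetic differences are that the paper states the result as $\hQod(s,\pi_o(s))=R_o(s)+\gamma^d G_o(s)$ with $G_o$ GEV (rather than absorbing the affine map into the GEV parameters) and parameterizes the scale as $\sigma_o\bigl[\Phi^{-1}(1-\tfrac{1}{en})-\Phi^{-1}(1-\tfrac{1}{n})\bigr]$ instead of your equivalent $a_n=1/(n\phi(b_n))$.
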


All proofs are deferred to Appendix~\ref{app:sec:theory}. 
Using the GEV distribution, we can now quantify the bias stemming from the maximization in each of two sub-trees  corresponding $\pi_o$ vs. all other actions.
\begin{lemma}\label{lem:bias}
It holds that
\begin{align*}
    \E\left[\hQod(s, \pi_o(s))\right] &= \Qod(s, \pi_o(s)) + \gamma^d B_o(\sigma_o, A, d) \label{eq:biased_max}\\ 
    \E\left[\max_{a\ne\pi_o(s)}\hQod(s, a)\right] & =\max_{a\neq \pi_o(s)}\Qod(s, a) + \gamma^d B_e(\sigma_e, A, d),
\end{align*}
where the biases $B_o,~ B_e$ are given in Appendix~\ref{app:GEVparams}, and satisfy $0\leq  B_o(\sigma_o, A, d)< B_e(\sigma_e, A, d)$.

\end{lemma}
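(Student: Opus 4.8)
The plan is to collapse both expectations into expected maxima of i.i.d.\ Gaussians, read off the biases from the Gumbel mean, and then compare the two by monotonicity in the number of leaves and in the noise scale. First I would unpack the left-hand sides using Assumption~\ref{ass:simpleComparison}: with $a_0=\pi_o(s)$ fixed, every continuation $(a_k)_{k=1}^{d}$ contributes $R_o(s)+\gamma^{d}\hQo(s_d,a_d)$, so $\hQod(s,\pi_o(s)) = R_o(s) + \gamma^{d}\max_{(a_k)_{k=1}^{d}}\hQo(s_d,a_d)$, a maximum of $A^{d}$ leaf values that, by Assumption~\ref{ass:NormQ}, are (independent) $\mathcal N(\mu_o(s),\sigma_o^{2})$ variables. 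Identically, $\max_{a\ne\pi_o(s)}\hQod(s,a) = R_e(s) + \gamma^{d}\max\hQo$ over the $(A-1)A^{d}$ leaves reached through a non-$\pi_o$ first action, each $\mathcal N(\mu_e(s),\sigma_e^{2})$. Running the same manipulation on the noiseless quantities—where $\Qo$ is the constant $\mu_o(s)$, resp.\ $\mu_e(s)$, on those leaves—gives the anchors $\Qod(s,\pi_o(s)) = R_o(s)+\gamma^{d}\mu_o(s)$ and $\max_{a\ne\pi_o(s)}\Qod(s,a) = R_e(s)+\gamma^{d}\mu_e(s)$.

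Next I would invoke Lemma~\ref{lem:Qdists}, which identifies (up to the deterministic affine shift and the EV-theory approximation) these maxima with Gumbel laws whose location/scale $(\mu^\star,\beta^\star)$ are the ones recorded in Appendix~\ref{app:GEVparams}. Since a Gumbel$(\mu^\star,\beta^\star)$ has mean $\mu^\star+\gamma_{\mathrm{EM}}\beta^\star$ with $\gamma_{\mathrm{EM}}$ the Euler--Mascheroni constant, taking expectations and subtracting the anchors isolates exactly a $\gamma^{d}$-scaled residual; one defines these residuals to be $B_o(\sigma_o,A,d)$ and $B_e(\sigma_e,A,d)$. Concretely they equal $\sigma_o\,m(A^{d})$ and $\sigma_e\,m\big((A-1)A^{d}\big)$, where $m(n)$ denotes the expected maximum of $n$ i.i.d.\ standard normals (equivalently $b_n+\gamma_{\mathrm{EM}}a_n$ under the EV normalization used in Lemma~\ref{lem:Qdists}). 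This already yields the two displayed identities.

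Finally I would prove $0\le B_o < B_e$. Nonnegativity is immediate: $\sigma_o>0$ and $m(n)\ge 0$ for $n\ge 1$ (with $m(1)=0$, and $m(n)>0$ for $n\ge 2$). For the strict inequality, for $d\ge 1$ and $A\ge 2$ the leaf counts satisfy $2\le A^{d}\le (A-1)A^{d}$; monotonicity of $n\mapsto m(n)$—adjoining an i.i.d.\ Gaussian can only raise the expected maximum, and strictly so since Gaussians have full support—gives $0<m(A^{d})\le m\big((A-1)A^{d}\big)$, and combining with $\sigma_o<\sigma_e$ yields $B_o=\sigma_o m(A^{d}) < \sigma_e m\big((A-1)A^{d}\big)=B_e$.

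I expect the main obstacle to be the bookkeeping tying this computation to Lemma~\ref{lem:Qdists}: one must check that the Gumbel parameters quoted from the appendix are precisely those of the EV-normalized maximum of $A^{d}$ (resp.\ $(A-1)A^{d}$) i.i.d.\ Gaussians with the stated means and variances, so that $\mu^\star+\gamma_{\mathrm{EM}}\beta^\star$ collapses cleanly to $\Qod(\cdot)+\gamma^{d}B$. The only genuine modeling input hidden in the reduction is independence of the leaf noise, which Assumption~\ref{ass:NormQ} supplies only at the level of marginals; and the positivity/monotonicity of $m$ must be phrased for whichever finite-$n$ versus asymptotic normalization the appendix actually adopts.
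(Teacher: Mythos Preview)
Your approach is essentially the paper's: reduce via Assumptions~\ref{ass:NormQ}--\ref{ass:simpleComparison} to a max of i.i.d.\ Gaussians, invoke Lemma~\ref{lem:Qdists} for the GEV law, read off the mean $\mu^{\text{GEV}} + \gamma_{\text{EM}}\,\sigma^{\text{GEV}}$, subtract the noiseless anchor to isolate the bias, and then establish $B_o < B_e$ by monotonicity in the leaf count combined with $\sigma_o<\sigma_e$.

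One concrete correction: the leaf counts actually used in Lemma~\ref{lem:Qdists} and in the appendix formulas for $B_o,B_e$ are $A^{d-1}$ for the $\pi_o$-subtree and $A^d - A^{d-1}=(A-1)A^{d-1}$ for its complement, not your $A^d$ and $(A-1)A^d$. This matters because the lemma asks you to match those specific $B_o,B_e$; in particular $B_o=0$ when $d=1$ (a single leaf), which your count of $A^1=A$ would not give. You correctly flagged this cross-check as the main obstacle, and indeed your counts do not line up with the appendix; once you adopt the paper's counts everything else in your outline goes through unchanged. A minor stylistic difference: for the inequality the paper writes the bias as $\sigma\cdot B(n)$ with $B(n)=\gamma_{\text{EM}}\Phi^{-1}\!\big(1-\tfrac{1}{en}\big)+(1-\gamma_{\text{EM}})\Phi^{-1}\!\big(1-\tfrac{1}{n}\big)$ and argues monotonicity of this explicit formula in $n$, whereas your stochastic-ordering argument for $m(n)$ (``adjoining an extra i.i.d.\ Gaussian can only raise the expected maximum'') is a cleaner, more direct route to the same conclusion.
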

 Lemma~\ref{lem:bias} conveys the main message of our analysis: the variance of terms being maximized translates to a positive shift in the expectation of the maximum. Hence, even if $\mu_o(s) > \mu_e(s)$ for a certain $s,$ a different action than $\pi_o(s)$ can be chosen with non-negligible probability as the bias in $\mu_e(s)$ is greater than the one in $\mu_o(s)$. To compensate for this bias that gives an unfair advantage to the noisier nodes of the tree, we introduce a penalty term that precisely cancels it.


\subsection{BCTS: Bellman Corrected Tree Search} \label{subsec:bcts}
Instead of selecting actions via \eqref{def: pi d}, in BCTS we replace  $\hQod$ with the corrected $Q^{\text{BCTS}}_d$ defined by
\begin{equation}
\label{eq: corrected q}
    \hat{Q}^{\text{BCTS},\pi_o}_d(s,a) := 
    \begin{cases}
    \hQod(s,a) & \mbox{ if } a_0=\pi_o(s_0), \\
    \hQod(s,a) - \gamma^d \left( B_e(\sigma_e, A, d) - B_o(\sigma_o, A, d) \right) & \mbox{ otherwise},
    \end{cases}
\end{equation}
and we denote the BCTS policy by \begin{equation}
    \label{def: pi bcts}
    \pi^{\text{BCTS}}_d:=\argmax_a \hat{Q}^{\text{BCTS},\pi_o}_d(s,a).
\end{equation} In the following result, we prove that BCTS indeed eliminates undesirable bias. 
\begin{theorem}\label{thm:corrected_policy}
The relation 
$
 \E \Bigg[ \hat{Q}^{\text{BCTS},\pi_o}_d(s,\pi_o(s)) \bigg] > \E \Bigg[ \max_{a\neq\pi_o(s)}\hat{Q}^{\text{BCTS},\pi_o}_d(s,a) \bigg]
 $
holds if and only if $\Qod(s, \pi_o(s)) > \max_{a \neq \pi_o(s)}\Qod(s, a).$
\end{theorem}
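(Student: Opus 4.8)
The plan is to reduce the theorem directly to Lemma~\ref{lem:bias} by computing the two expectations appearing in the statement and comparing them. By linearity of expectation and the definition \eqref{eq: corrected q}, the left-hand side $\E[\hat Q^{\text{BCTS},\pi_o}_d(s,\pi_o(s))]$ equals $\E[\hQod(s,\pi_o(s))]$ since no correction is applied to the on-policy action, which by Lemma~\ref{lem:bias} is $\Qod(s,\pi_o(s)) + \gamma^d B_o(\sigma_o,A,d)$. For the right-hand side, observe that for every $a \neq \pi_o(s)$ the corrected value is obtained from $\hQod(s,a)$ by subtracting the same deterministic constant $\gamma^d(B_e - B_o)$; since subtracting a constant commutes with the maximum, $\max_{a\neq\pi_o(s)}\hat Q^{\text{BCTS},\pi_o}_d(s,a) = \big(\max_{a\neq\pi_o(s)}\hQod(s,a)\big) - \gamma^d(B_e - B_o)$. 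Taking expectations and invoking the second identity of Lemma~\ref{lem:bias} gives $\max_{a\neq\pi_o(s)}\Qod(s,a) + \gamma^d B_e - \gamma^d(B_e - B_o) = \max_{a\neq\pi_o(s)}\Qod(s,a) + \gamma^d B_o$.

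The key observation is then that both sides carry the \emph{same} additive bias term $\gamma^d B_o(\sigma_o,A,d)$. Subtracting it from both sides of the desired inequality, the relation
$\E[\hat Q^{\text{BCTS},\pi_o}_d(s,\pi_o(s))] > \E[\max_{a\neq\pi_o(s)}\hat Q^{\text{BCTS},\pi_o}_d(s,a)]$
is therefore equivalent to
$\Qod(s,\pi_o(s)) + \gamma^d B_o > \max_{a\neq\pi_o(s)}\Qod(s,a) + \gamma^d B_o$,
which is equivalent to $\Qod(s,\pi_o(s)) > \max_{a\neq\pi_o(s)}\Qod(s,a)$. This is exactly the claimed ``if and only if'' condition, and since every step in the chain is an equivalence, both directions follow simultaneously.

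One technical point I would be careful about is the interchange of expectation and maximum over $a \neq \pi_o(s)$: this is not an inequality issue but a genuine identity, valid because we are pulling out an additive deterministic constant, not a random quantity, so $\E[\max_a (X_a - c)] = \E[\max_a X_a] - c$ holds exactly. The only substantive input is Lemma~\ref{lem:bias}, which already packages the GEV bias computation (including the fact that the on-policy and worst-of-the-rest estimators are each shifted by $\gamma^d B_o$ and $\gamma^d B_e$ respectively); given that lemma, the proof of Theorem~\ref{thm:corrected_policy} is essentially bookkeeping. I do not anticipate a genuine obstacle here — the correction term in \eqref{eq: corrected q} was reverse-engineered precisely to make the two biases coincide, so the main ``work'' is just to state that cancellation cleanly and note that the resulting equivalences chain in both directions.
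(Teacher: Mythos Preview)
Your proposal is correct and follows essentially the same approach as the paper's own proof: both compute the two expectations via Lemma~\ref{lem:bias} after noting that the deterministic correction $\gamma^d(B_e-B_o)$ commutes with the maximum over $a\neq\pi_o(s)$, yielding the common additive term $\gamma^d B_o$ on both sides. Your write-up is in fact slightly cleaner than the paper's, which compresses the substitution of Lemma~\ref{lem:bias} into a single line in a way that reads awkwardly; your explicit remark that pulling out the constant is an exact identity (not a Jensen-type inequality) is a nice clarification.
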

The biases $B_o,~B_e$ include the inverse of the cumulative standard normal distribution $\Phi^{-1}$ (Appendix~\ref{app:GEVparams}). We now approximate them with simple closed-form expressions that are highly accurate for $d \geq 2$ (Appendix~\ref{app:subsec:approx}). These approximations help revealing how the problem parameters dictate prominent quantities such as the correction term in \eqref{eq: corrected q} and the probability in Thm.~\ref{thm: approx prob bound} below. 

\begin{lemma}
    \label{lem:approx_correction}
   When $A^{d-1} \gg 1$, the correction term in \eqref{eq: corrected q} can be approximated with
\begin{equation}
\label{eq: bias approx}
B_e(\sigma_e, A, d) - B_o(\sigma_o, A, d) \approx \sqrt{2\log A}\left(\sigma_e\sqrt{d} - \sigma_o\sqrt{d-1}\right) - (\sigma_e - \sigma_o) / 2.
\end{equation}
\end{lemma}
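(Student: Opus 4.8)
The plan is to reduce the claim to a single scalar approximation and then substitute. By Lemma~\ref{lem:bias}, together with Assumptions~\ref{ass:NormQ}--\ref{ass:simpleComparison}, the quantities $B_o/\sigma_o$ and $B_e/\sigma_e$ are the expected maxima of $N_o$ and $N_e$ i.i.d.\ standard Gaussians, where (see the proof of Lemma~\ref{lem:Qdists}) the on-policy sub-tree contributes $N_o=A^{d-1}$ leaves and its complement $N_e=A^{d}$ leaves; this is precisely the source of the $\sqrt{d-1}$ versus $\sqrt{d}$ asymmetry in the target formula. Appendix~\ref{app:GEVparams} records these expected maxima through the GEV (Gumbel) parameters, i.e.\ as $f(N):=\mu_N+\gamma_{\mathrm{EM}}\beta_N$ with location $\mu_N=\Phi^{-1}(1-1/N)$, scale $\beta_N=1/\bigl(N\phi(\mu_N)\bigr)$ ($\phi=\Phi'$), and $\gamma_{\mathrm{EM}}$ the Euler--Mascheroni constant. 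It therefore suffices to prove the single estimate $f(N)\approx\sqrt{2\log N}-\tfrac12$ for moderately large $N$, and then set $N\in\{A^{d-1},A^{d}\}$ and use $\log A^{k}=k\log A$.

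For that estimate I would invoke the classical asymptotics of the Gaussian quantile. Writing $z=\mu_N$ and using the Mills-ratio expansion $1-\Phi(z)=\phi(z)z^{-1}\bigl(1+O(z^{-2})\bigr)$ together with $1-\Phi(\mu_N)=1/N$, one obtains $\log N=\tfrac12 z^{2}+\log z+\tfrac12\log(2\pi)+O(z^{-2})$; solving this relation iteratively (starting from $z_0=\sqrt{2\log N}$) gives $\mu_N=\sqrt{2\log N}-\dfrac{\log\log N+\log 4\pi}{2\sqrt{2\log N}}+o\!\bigl(1/\sqrt{\log N}\bigr)$, while the same tail estimate yields $\phi(\mu_N)=(\mu_N/N)\bigl(1+o(1)\bigr)$ and hence $\beta_N=(1/\mu_N)\bigl(1+o(1)\bigr)=\bigl(1+o(1)\bigr)/\sqrt{2\log N}$. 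Combining the two, $f(N)=\sqrt{2\log N}+\dfrac{2\gamma_{\mathrm{EM}}-\log\log N-\log 4\pi}{2\sqrt{2\log N}}+o\!\bigl(1/\sqrt{\log N}\bigr)$.

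The remaining --- and, I expect, only delicate --- step is to replace that correction term by the constant $-\tfrac12$. This is not an $N\to\infty$ identity (the term in fact vanishes as $N\to\infty$); it is a numerical approximation that is excellent over the range of $N=A^{d-1},A^{d}$ relevant in practice, which is exactly the content of ``highly accurate for $d\ge2$'' in the statement. I would make it precise by bounding the slowly varying function $g(N):=\bigl(2\gamma_{\mathrm{EM}}-\log\log N-\log 4\pi\bigr)/\bigl(2\sqrt{2\log N}\bigr)$ on the interval of interest, showing it stays within a few hundredths of $-\tfrac12$ there, so that $f(N)\approx\sqrt{2\log N}-\tfrac12$ holds with a controlled error. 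Plugging in $N_o=A^{d-1}$ and $N_e=A^{d}$ then gives $B_o\approx\sigma_o\bigl(\sqrt{d-1}\,\sqrt{2\log A}-\tfrac12\bigr)$ and $B_e\approx\sigma_e\bigl(\sqrt{d}\,\sqrt{2\log A}-\tfrac12\bigr)$, and subtracting yields exactly $\sqrt{2\log A}\,\bigl(\sigma_e\sqrt d-\sigma_o\sqrt{d-1}\bigr)-(\sigma_e-\sigma_o)/2$. (Should the appendix instead count the off-policy leaves as $(A-1)A^{d-1}$, the discrepancy $\log\tfrac{A}{A-1}=O(1/A)$ is absorbed by the same error terms and does not affect the result.)
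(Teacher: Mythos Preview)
Your argument is correct and structurally matches the paper's proof: both write $B_e=\sigma_e\bigl(\text{location}+\gamma_{\mathrm{EM}}\cdot\text{scale}\bigr)$ for the GEV parameters, approximate the location by $\sqrt{2\log N}-\tfrac12$, discard the scale contribution as $O(1/\sqrt{\log N})$, substitute $N\in\{A^{d-1},A^d\}$ (with $A^d-A^{d-1}\approx A^d$), and subtract.

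Two minor differences are worth flagging. First, the paper does not derive the probit approximation from the Mills ratio as you do; it simply invokes $\Phi^{-1}(1-1/n)\approx\sqrt{2\log n}-0.5$ as a known numerical approximation (citing \cite{blair1976rational}) and verifies it empirically in Appendix~\ref{app:subsec:approx}. Your derivation is more self-contained and, importantly, makes explicit what the paper leaves implicit: that the $-\tfrac12$ is not an asymptotic constant but a numerical fit over the relevant range of $N$. Second, your expression for the GEV scale, $\beta_N=1/(N\phi(\mu_N))$, is the standard textbook normalization but is \emph{not} the one recorded in Appendix~\ref{app:GEVparams}; there the scale is written as the quantile difference $\Phi^{-1}(1-\tfrac{1}{eN})-\Phi^{-1}(1-\tfrac{1}{N})$. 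The two are asymptotically equivalent (both $\sim 1/\sqrt{2\log N}$), so your argument goes through unchanged, but your appeal to that appendix is slightly inaccurate on this point.
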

The bias gap in \eqref{eq: bias approx} depends on the ratio between $\sigma_e$ and $\sigma_o;$ this suggests that TS in different environments will be affected differently. As $\sigma_e > \sigma_o,$ the bias gap is positive. This is indeed expected, since the maximum over the sub-trees of $a_0 \ne \pi_o(s)$ includes noisier elements than those in the sub-tree of $a_0 = \pi_o(s).$ Also, in \eqref{eq: bias approx}, $\sigma_e\sqrt{d}$ dominates $\sigma_o\sqrt{d-1},$ making the bias gap grow asymptotically with $\sqrt{d\log A}.$ This rate reflects how the number of elements being maximized over affects the bias of their maximum. 

Next, we bound the probability of choosing a sub-optimal action when using BCTS. In Appendix~\ref{app:sec:theory}, Thm.~\ref{app:thm:bound_exact}, we give an exact bound to that probability without assuming $A^{d-1} \gg 1.$ Here, we apply Lemma.~\ref{lem:approx_correction} to give the result in terms of $\sigma_o,~\sigma_e.$
\begin{theorem}
\label{thm: approx prob bound}
When $A^{d-1} \gg 1$, the policy $\pi^{\text{BCTS}}_d(s)$ (see \eqref{def: pi bcts}) chooses a sub-optimal action with probability bounded by: 
\begin{align*}
    &\Pr\left(\pi^{\text{BCTS}}_d(s) \notin \arg\max_a \Qod(s,a)\right) 
    \leq \left(1 + \frac{6d\log A \left(\Qod(s, \pi_o(s)) - \max_{a \neq \pi_o(s)}\Qod(s, a)\right)^2}{\gamma^{2d}\pi^2\left(\sigma_o^2+\sigma_e^2\right)}\right)^{-1}. \label{eq: subpoptimal prob bound}
\end{align*}
\end{theorem}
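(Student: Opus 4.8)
The plan is to bound the sub-optimality probability by relating the event ``$\pi^{\text{BCTS}}_d(s)$ is sub-optimal'' to a tail event for the difference of the two (corrected) random variables from Lemma~\ref{lem:Qdists}--\ref{lem:bias}, and then apply a Chebyshev-type inequality. Write $X := \hat Q^{\text{BCTS},\pi_o}_d(s,\pi_o(s))$ and $Y := \max_{a\neq\pi_o(s)}\hat Q^{\text{BCTS},\pi_o}_d(s,a)$. Assume, WLOG, that $\pi_o(s)$ is in fact the optimal $d$-step action (if it is not, a symmetric argument applies, or the bound is vacuous in the relevant regime since the RHS is designed around the gap $\Qod(s,\pi_o(s))-\max_{a\neq\pi_o(s)}\Qod(s,a)$, assumed positive). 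Then $\pi^{\text{BCTS}}_d(s)$ is sub-optimal only if $Y \ge X$, i.e. only if $X - Y \le 0$. By Theorem~\ref{thm:corrected_policy} (or directly by the bias formulas in Lemma~\ref{lem:bias} together with the correction in \eqref{eq: corrected q}), $\E[X-Y] = \Qod(s,\pi_o(s)) - \max_{a\neq\pi_o(s)}\Qod(s,a) =: \Delta > 0$, since the $\gamma^d B_o$ and $\gamma^d B_e$ terms cancel exactly after the BCTS correction. Hence $\{X-Y\le 0\}\subseteq\{|X-Y-\E[X-Y]|\ge \Delta\}$, and Chebyshev gives $\Pr(X-Y\le 0)\le \operatorname{Var}(X-Y)/\Delta^2$.

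The next step is to control $\operatorname{Var}(X-Y)$. Since $X$ and $Y$ are built from disjoint sub-trees (the $a_0=\pi_o(s_0)$ branch versus the rest), under Assumptions~\ref{ass:NormQ}--\ref{ass:simpleComparison} they are independent, so $\operatorname{Var}(X-Y)=\operatorname{Var}(X)+\operatorname{Var}(Y)$; subtracting a constant (the correction) does not change variances. Now $X$ is (up to the deterministic additive reward/leaf terms $R_o,\mu_o$ and the factor $\gamma^d$) the maximum of $A^{d-1}$ i.i.d.\ $\mathcal N(\cdot,\sigma_o^2)$ variables, and similarly $Y$ is $\gamma^d$ times the maximum of $A^d$ (or a comparable count of) i.i.d.\ $\mathcal N(\cdot,\sigma_e^2)$ variables. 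The classical Gumbel-limit / GEV computation — already invoked for Lemmas~\ref{lem:Qdists} and \ref{lem:approx_correction} — gives that the maximum of $n$ i.i.d.\ standard normals has variance asymptotically $\pi^2/(12\log n)$. Plugging $n=A^{d-1}\approx A^d$ for large $A^{d-1}$, one gets $\operatorname{Var}(X)\approx \gamma^{2d}\sigma_o^2\,\pi^2/(12\, d\log A)$ and $\operatorname{Var}(Y)\approx \gamma^{2d}\sigma_e^2\,\pi^2/(12\, d\log A)$, hence $\operatorname{Var}(X-Y)\approx \gamma^{2d}\pi^2(\sigma_o^2+\sigma_e^2)/(12\, d\log A)$. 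Substituting into the Chebyshev bound yields
\[
\Pr\big(\pi^{\text{BCTS}}_d(s)\text{ sub-optimal}\big)\;\le\;\frac{\gamma^{2d}\pi^2(\sigma_o^2+\sigma_e^2)}{12\, d\log A\;\Delta^2},
\]
and rewriting $\frac{c}{c+x}$-style (with $c=\gamma^{2d}\pi^2(\sigma_o^2+\sigma_e^2)$, $x=12\,d\log A\,\Delta^2$) — or simply using $t\le (1+1/t)^{-1}$ for $t\le1$ while noting the bound is trivially true when it exceeds $1$ — produces exactly the displayed form with the $6d\log A\,\Delta^2$ term (the factor $6$ rather than $12$ presumably coming from a slightly looser but cleaner constant in the variance estimate, e.g.\ from bounding $\operatorname{Var}(\max)\le \pi^2/(6\log n)$). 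The $(1+\,\cdot\,)^{-1}$ packaging is what makes the bound simultaneously a valid probability bound and tight in the large-gap regime.

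The main obstacle is the variance estimate of the maximum of i.i.d.\ Gaussians: the Gumbel limit controls the \emph{distributional} limit of $a_n(M_n-b_n)$, but transferring this to a bound on $\operatorname{Var}(M_n)$ that is valid (not merely asymptotic) and with the stated explicit constant requires either the exact GEV moments from Appendix~\ref{app:GEVparams} (the scale parameter of the fitted GEV squared, times $\pi^2/6$) or a direct non-asymptotic second-moment computation; this is precisely where the ``$A^{d-1}\gg1$'' hypothesis and the approximations of Lemma~\ref{lem:approx_correction} / Appendix~\ref{app:subsec:approx} do the heavy lifting, and it is where one must be careful that the same scale parameter appearing in the bias approximation \eqref{eq: bias approx} also governs the variance. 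A secondary, more routine obstacle is handling the non-asymptotic version cleanly (deferred here to Thm.~\ref{app:thm:bound_exact}) and verifying that the additive deterministic terms $R_o,R_e,\mu_o,\mu_e$ genuinely drop out of the variance and feed only into $\Delta$, which follows directly from Assumption~\ref{ass:simpleComparison}.
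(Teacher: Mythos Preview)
Your overall plan mirrors the paper's: reduce to a one-sided tail bound for the (corrected) difference of the two GEV variables, use Theorem~\ref{thm:corrected_policy} to identify the mean as the true gap $\Delta$, read off the variance from the GEV scale parameters of Lemma~\ref{lem:Qdists}, and then approximate those scales via $\sigma^{\text{GEV}}\approx\sigma/\sqrt{2d\log A}$ (which is exactly how Lemma~\ref{lem:approx_correction} is proved in the appendix). Your guess that loosening to $\sigma^{\text{GEV}}\le\sigma/\sqrt{d\log A}$ is what trades the $12$ for a $6$ is also on the mark; that is precisely the last step the paper takes after establishing the non-approximate bound~\eqref{eq:full_bound}.

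The one genuine gap is the choice of tail inequality and the attempted conversion. You apply two-sided Chebyshev, obtaining $\Pr\le \operatorname{Var}/\Delta^2$, and then try to repackage this as $(1+\Delta^2/\operatorname{Var})^{-1}$. That step is not valid: for every $t>0$ one has $(1+1/t)^{-1}=t/(1+t)<t$, so the Cantelli-shaped bound is strictly \emph{smaller} than the Chebyshev bound and cannot be deduced from it; your ``trivially true when it exceeds $1$'' escape hatch also does not cover the regime where $\operatorname{Var}/\Delta^2\in[1/2,1]$. The paper instead applies Cantelli's one-sided inequality
\[
\Pr\big(Z-\E Z\ge\lambda\big)\;\le\;\Big(1+\tfrac{\lambda^2}{\operatorname{Var} Z}\Big)^{-1}
\]
directly to $Z=Y-X$ with $\lambda=-\E[Y-X]=\Delta$. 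This yields the $(1+\cdot)^{-1}$ form immediately, with the exact GEV variances $\gamma^{2d}\big((\sigma_o^{\text{GEV}})^2+(\sigma_e^{\text{GEV}})^2\big)\pi^2/6$ (this is Thm.~\ref{app:thm:bound_exact}); only afterward is the large-$A^{d-1}$ approximation inserted. Swapping your Chebyshev step for Cantelli fixes the argument with no other changes needed.
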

The fraction in the above bound is a signal-to-noise ratio:
The expected value difference in the numerator represents the signal, while the variances in the denominator represent the noise. In addition, the signal is ``amplified'' by $d\log A$ because, after applying the correction from \eqref{eq: corrected q}, a larger number of tree nodes amount to a more accurate maximum estimator. A similar amplification also occurs due to nodes being deeper and is captured by $\gamma^d$ in the denominator.

The factor $\gamma^d$ also appears in the correction term from \eqref{eq: corrected q}. 
This correction is the bias gap from \eqref{eq: bias approx}, scaled by $\gamma^d.$ Hence, asymptotically, while the bias gap grows with $\sqrt{d},$ the exponential term is more dominant; so, overall, this product decays with $d.$ 
This decay reduces the difference between the vanilla and BCTS policies for deeper trees by lowering the portion of the estimated value compared to the exact reward. As we show later, this phenomenon is consistent with our experimental observations.
\begin{SCfigure}
    \centering
    \includegraphics[scale=0.38]{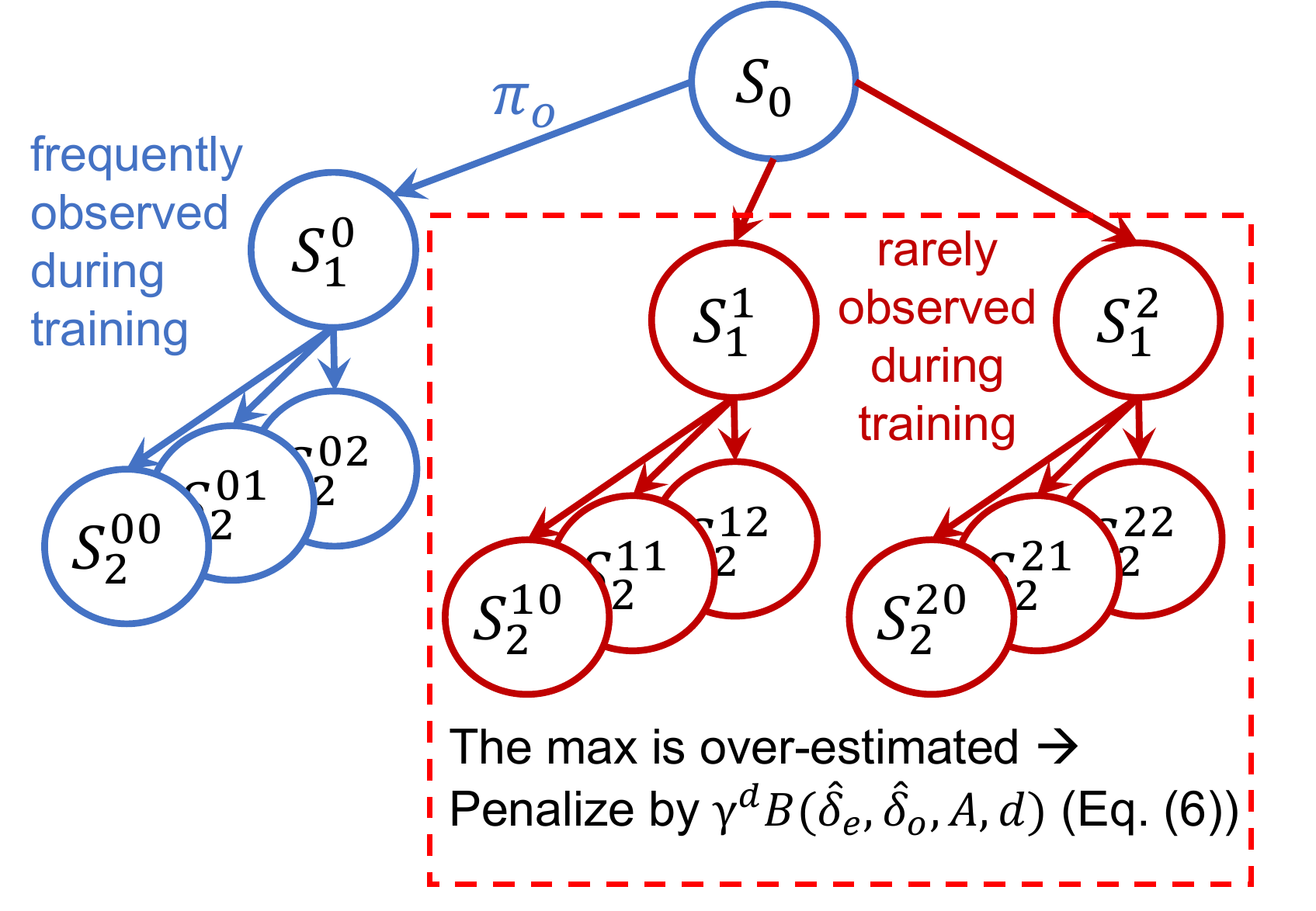}
    \caption{\textbf{BCTS Algorithm.}  Exploring the full tree reveals out-of-distribution states (red). These were less visited during training and tend to have highly variable scores, leading to high overestimation error. The penalty term $B(\hat{\delta}_e,\hat{\delta}_o,A,d)$ (see \eqref{eq: belmman penalty}) cancels the excess bias. The Bellman errors $\hat{\delta}_e,\hat{\delta}_o$ are extracted from the tree at depth $1.$ 
    }
    \label{fig:correction_tree}
\end{SCfigure}
Computing the correction term requires estimates of $\sigma_o$ and $\sigma_e$. As a surrogate for the error, we use the Bellman error. To justify it, let us treat the values of $\hQo_d$ at different depths as samples of $\hQo.$ Then, the following result holds for its variance estimator. Note that we do not need Assumptions~\ref{ass:NormQ} and \ref{ass:simpleComparison} to prove the following result.
 \begin{proposition}\label{prop:BellmanErrorVariance}
    Let $\widehat{{\text{var}}}_n[X]$ be the variance estimator based on $n$ samples of X. Then,
     \begin{equation*}
     \widehat{{\text{var}}}_{n=2}[\hQo(s, a)] = {\left(\hQo_1(s, a) - \hQo_0(s, a)\right)^2} / {2} = {\delta^2(s,a)} / {2},
 \end{equation*}
 where $\delta(s,a)$ is the Bellman error.
 \end{proposition}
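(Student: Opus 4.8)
The statement is a one-line identity: I want to show that the $n=2$ sample variance estimator applied to the two ``samples'' $\hQo_0(s,a)$ and $\hQo_1(s,a)$ equals $\delta^2(s,a)/2$, where $\delta(s,a)$ is the Bellman error. So first I would pin down the two ingredients by definition. The ``samples'' here are the $d$-step estimators at depths $0$ and $1$: by \eqref{eq:dStepQ}, $\hQo_0(s,a)=\hQo(s,a)$ (the depth-$0$ estimator is just the raw value), while $\hQo_1(s,a)=\max_{a_1}\big[r(s,a)+\gamma\,\hQo(s_1,a_1)\big]=r(s,a)+\gamma\max_{a_1}\hQo(s_1,a_1)$ under the deterministic-transition convention of the preliminaries, where $s_1$ is the state reached from $s$ by action $a$. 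The Bellman error is then $\delta(s,a):=\hQo_1(s,a)-\hQo_0(s,a)=r(s,a)+\gamma\max_{a_1}\hQo(s_1,a_1)-\hQo(s,a)$, i.e., the usual temporal-difference residual; this is exactly the quantity minimized (in squared form) during training, which is why it is a natural surrogate for the estimation error.

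\medskip
\noindent The second ingredient is the elementary algebraic fact about the unbiased sample variance of two numbers. For any $x_1,x_2\in\R$ with sample mean $\bar x=(x_1+x_2)/2$, the $n=2$ (Bessel-corrected) variance estimator is $\widehat{\text{var}}_{2}[X]=\frac{1}{n-1}\sum_{i=1}^{2}(x_i-\bar x)^2=(x_1-\bar x)^2+(x_2-\bar x)^2$. Since $x_1-\bar x=(x_1-x_2)/2$ and $x_2-\bar x=(x_2-x_1)/2$, this collapses to $2\cdot\big((x_1-x_2)/2\big)^2=(x_1-x_2)^2/2$. I would state this as a short computation, then substitute $x_1=\hQo_1(s,a)$ and $x_2=\hQo_0(s,a)$ to obtain $\widehat{\text{var}}_{2}[\hQo(s,a)]=\big(\hQo_1(s,a)-\hQo_0(s,a)\big)^2/2=\delta^2(s,a)/2$, which is the claim.

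\medskip
\noindent There is essentially no obstacle: the proof is a two-step unwinding of definitions plus one line of algebra, and indeed the text already flags that Assumptions~\ref{ass:NormQ} and \ref{ass:simpleComparison} are not needed. The only point requiring a sentence of care is the identification $\hQo_1(s,a)-\hQo_0(s,a)=\delta(s,a)$: one should make explicit that ``Bellman error'' here means the (one-step, max-based / optimality) TD residual evaluated with the estimator $\hQo$, and that this matches $\hQod$ from \eqref{eq:dStepQ} specialized to $d=1$ under deterministic dynamics (for stochastic dynamics one replaces $\max_{a_1}\hQo(s_1,a_1)$ by its expectation over $s_1\sim P(\cdot\mid s,a)$, and the identity persists verbatim). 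I would close by remarking that this is precisely why the empirically available depth-$1$ values in the tree (Fig.~\ref{fig:correction_tree}) furnish, for free, a noisy estimate of the per-action value variance, justifying plugging $\hat\delta_o,\hat\delta_e$ into the correction term.
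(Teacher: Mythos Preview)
Your proposal is correct and follows essentially the same approach as the paper: both apply the Bessel-corrected sample variance formula $\frac{1}{n-1}\sum_i(x_i-\bar x)^2$ to the two ``samples'' $\hQo_0(s,a)$ and $\hQo_1(s,a)$ and reduce it algebraically to $(x_1-x_2)^2/2=\delta^2(s,a)/2$. Your write-up is slightly more explicit about the identification $\hQo_1-\hQo_0=\delta$ (which the paper leaves implicit), but the argument is the same.
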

 Note that during a TS, at depth $1$ we have access to $\delta(s_0,a)$ of all $a \in \A$ without additional computation. For depths $2$ and above, the Bellman error is defined only for actions chosen by $\pi_o,$ corresponding to a single trajectory down the tree. For these reasons, we base the above result on samples from depths $0$ and $1.$ 

Thanks to Prop.~\ref{prop:BellmanErrorVariance}, we can estimate the bias correction term in Lemma~\ref{lem:approx_correction} directly from the TS operation. Specifically, we substitute $\hat{\delta}_o/\sqrt{2}$ instead of $\sigma_o$ and the same for $\sigma_e,$ where $\hat{\delta}_o$ is the Bellman error corresponding to $a=\pi_o(s)$ at the root, and $\hat{\delta}_e$ is the average Bellman error of all other actions. Hence, the correction term is 
\begin{equation}
    \label{eq: belmman penalty}
    B(\hat{\delta}_e, \hat{\delta}_o, A, d) = \sqrt{\log A}\left(\hat{\delta}_e\sqrt{d} - \hat{\delta}_o\sqrt{d-1}\right) - (\hat{\delta}_e - \hat{\delta}_o) / \sqrt{8}.
\end{equation} 
 A visualization of the resulting BCTS algorithm is in Fig.~\ref{fig:correction_tree}.

\section{Solving Scalability via Batch-BFS}\label{sec:Batch-BFS}

The second drawback of TS is scalability: exhaustive TS is impractical for non-trivial tree depths because of the exponential growth of the tree dimension.
As TS requires generating $|A|^d$ leaves at depth $d$, it has been rarely considered as a viable solution.
To mitigate this issue, we propose Batch-BFS, an efficient, parallel, TS scheme based on BFS, built upon the ability to advance multiple environments simultaneously; see Alg. \ref{alg:BatchBFS}.
It achieves a significant runtime speed-up when a forward model is implemented on a GPU.
Such GPU-based environments are becoming common nowadays because of their advantages (including parallelization and higher throughput) over their CPU counterparts. E.g., Isaac-Gym \cite{IsaacGym} provides a GPU implementation robotic manipulation tasks \cite{sutton2018reinforcement}, whereas Atari-CuLE \cite{dalton2019accelerating} is a CUDA-based version of the AtariGym benchmarks \cite{brockman2016openai}.
Batch-BFS is not limited to exact simulators and can be applied to learned deep forward models, like those in \cite{nagabandi2018neural, kaiser2019model}.   Since Batch-BFS simultaneously advances the entire tree, it enables exhaustive tree expansion to previously infeasible depths. It also allows access to the cumulative reward and estimated value over all the tree nodes. Such access to all future values paves a path to new types of algorithms for, e.g., risk reduction and early tree-pruning. Efficient pruning can be done by maintaining an index array of unpruned states which are updated with each pruning step. These indices are then used for tracing the optimal action at the root.  We leave such directions to future work.  In addition to Alg.~\ref{alg:BatchBFS}, we also provide a visualization of Batch-BFS in Fig.~\ref{fig:batchBFSdiagram}. 

\begin{figure}
    \centering
    \includegraphics[width=0.7\linewidth]{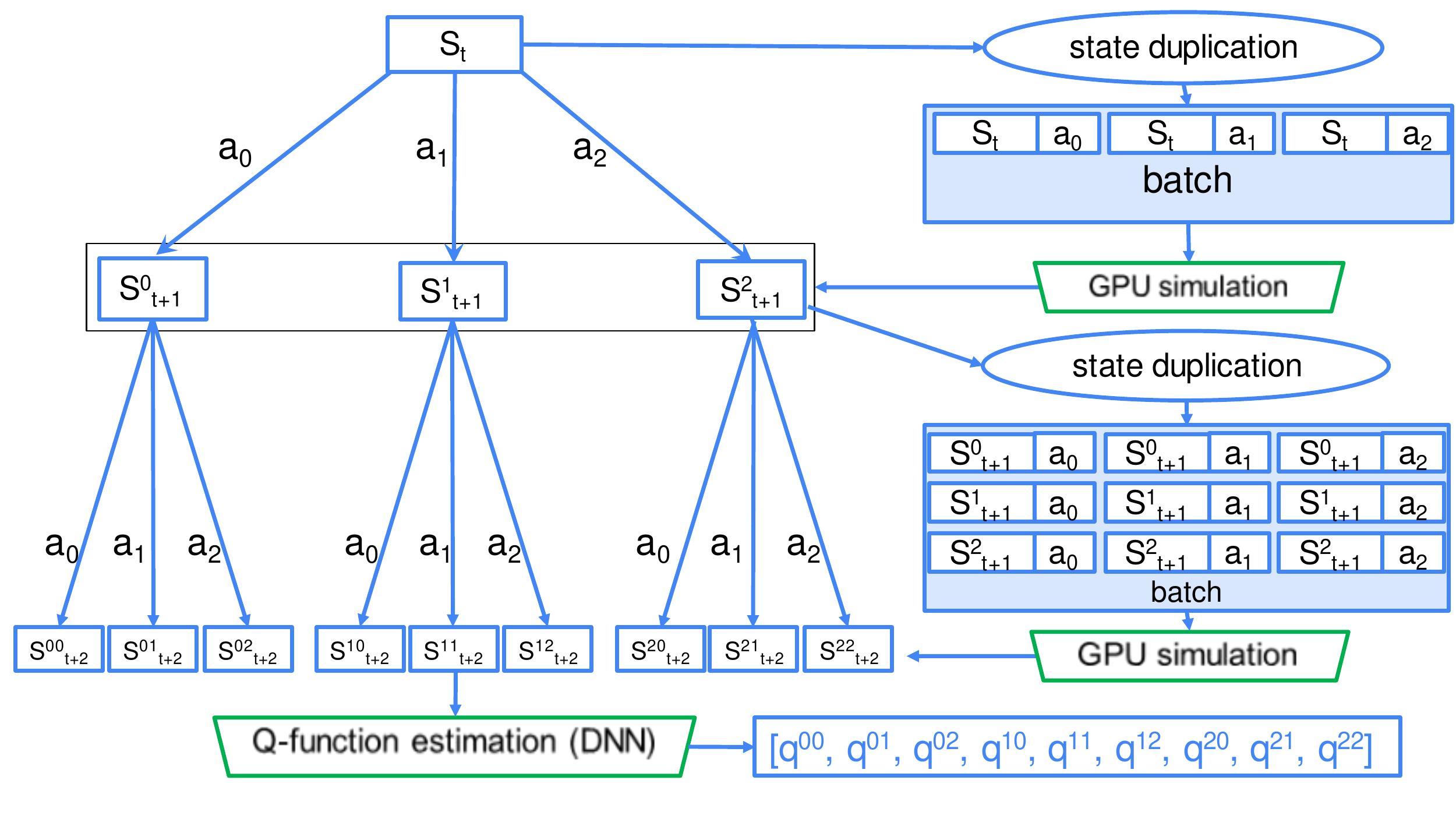}
    \caption{\textbf{Visualization of Batch-BFS.} The tree expansion is illustrated on the left, with the corresponding batch GPU operations on the right. In every tree expansion, the state $S_t$ is duplicated and concatenated with all possible actions. The resulting tensor is fed into the GPU forward model to generate the tensor of next states $(S^0_{t+1},\dots,S^{A-1}_{t+1})$. The next-state tensor is then duplicated and concatenated again with all possible actions, fed into the forward model, etc. This procedure is performed until the final depth is reached, in which case the Q-function is applied per state. 
    }
    \label{fig:batchBFSdiagram}
\end{figure}

{\small{
\begin{algorithm}[tb] 
   \caption{Batch-BFS}
   \label{alg:example}
\begin{algorithmic}\label{alg:BatchBFS}
   \STATE {\bfseries Input:} GPU environment ${\mathcal G}$, value network $Q_\theta$, depth $d$
   \STATE {\bfseries Init tensors:} state $\bar{S} = [s_0],$ action $\bar{A} = \left[0, 1, 2, .., A-1\right]$, reward  $\bar{R}=[0]$
   \FOR{$i_d=0$ {\bfseries to} $d-1$}
   \STATE $\bar{S} \leftarrow \bar{S} \times A,~~\bar{R} \leftarrow \bar{R} \times A$ \COMMENT{Replicate state and reward tensors $A$ times} 
   \STATE $\bar{r},\bar{S}' = {\mathcal G}([\bar{S},\bar{A}])$ \COMMENT{Feed  $[\bar{S},\bar{A}]$ to simulator and advance}
   \STATE $\bar{R} \leftarrow \bar{R} + \gamma^{i_d} \bar{r}, ~~\bar{S} \leftarrow \bar{S}'$ \COMMENT{Accumulate discounted reward }
   \STATE  $\bar{A} \leftarrow \bar{A} \times A$ \COMMENT{Replicate action tensor $A$ times}
   \ENDFOR
   \STATE $\bar{R} \leftarrow \bar{R} + \gamma^d \max_a Q_\theta(\bar{S},a) $\COMMENT{Accumulate discounted value of states at depth $d$}
   \STATE \textbf{Return} $\lfloor(\argmax \bar{R}) / A^{d-1} \rfloor$ \COMMENT{Return optimal action at the root}
\end{algorithmic}
\end{algorithm}
}}

\begin{figure}
\centering
        \includegraphics[width=0.47\linewidth]{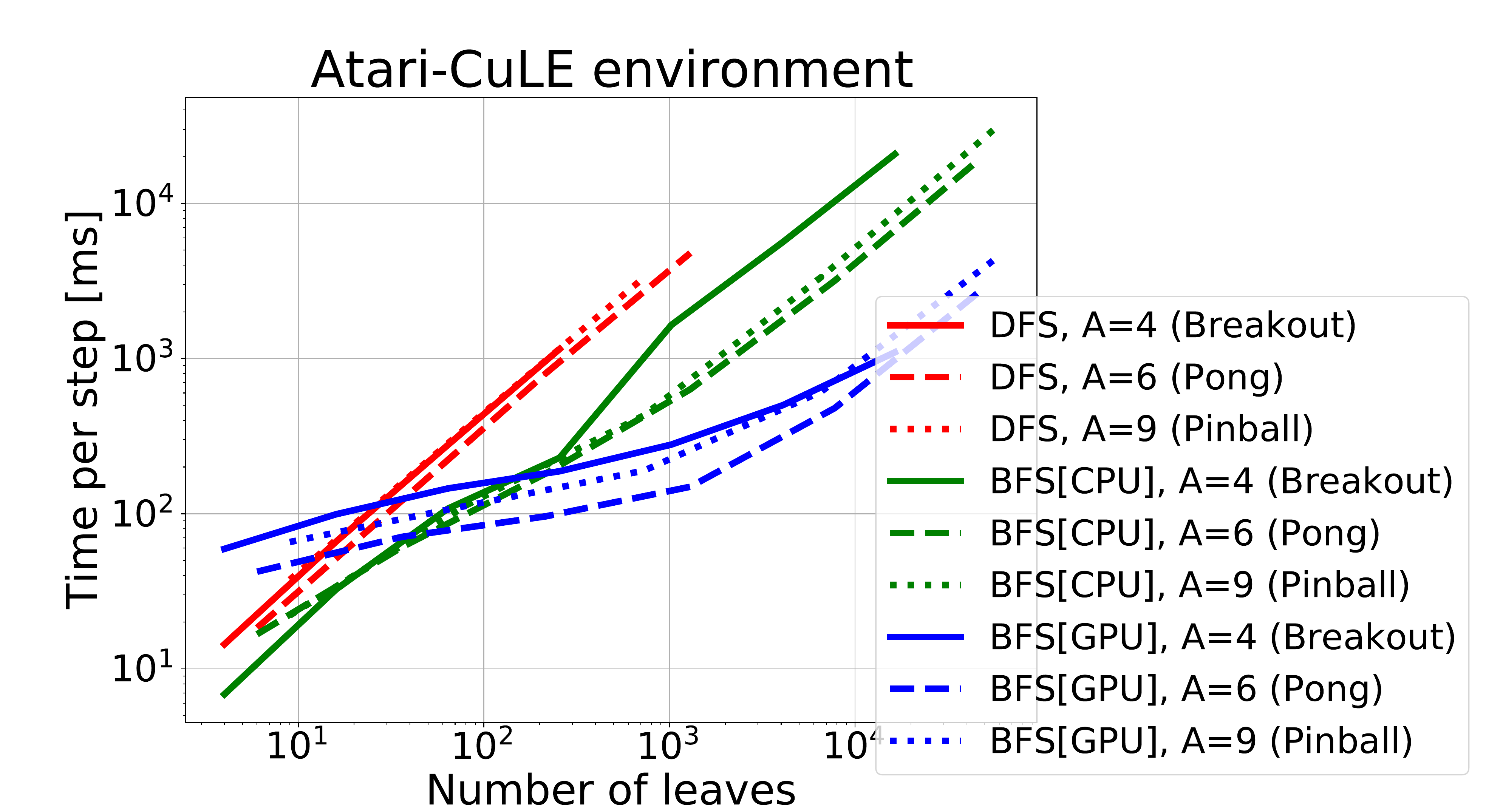}
        \includegraphics[width=0.47\linewidth]{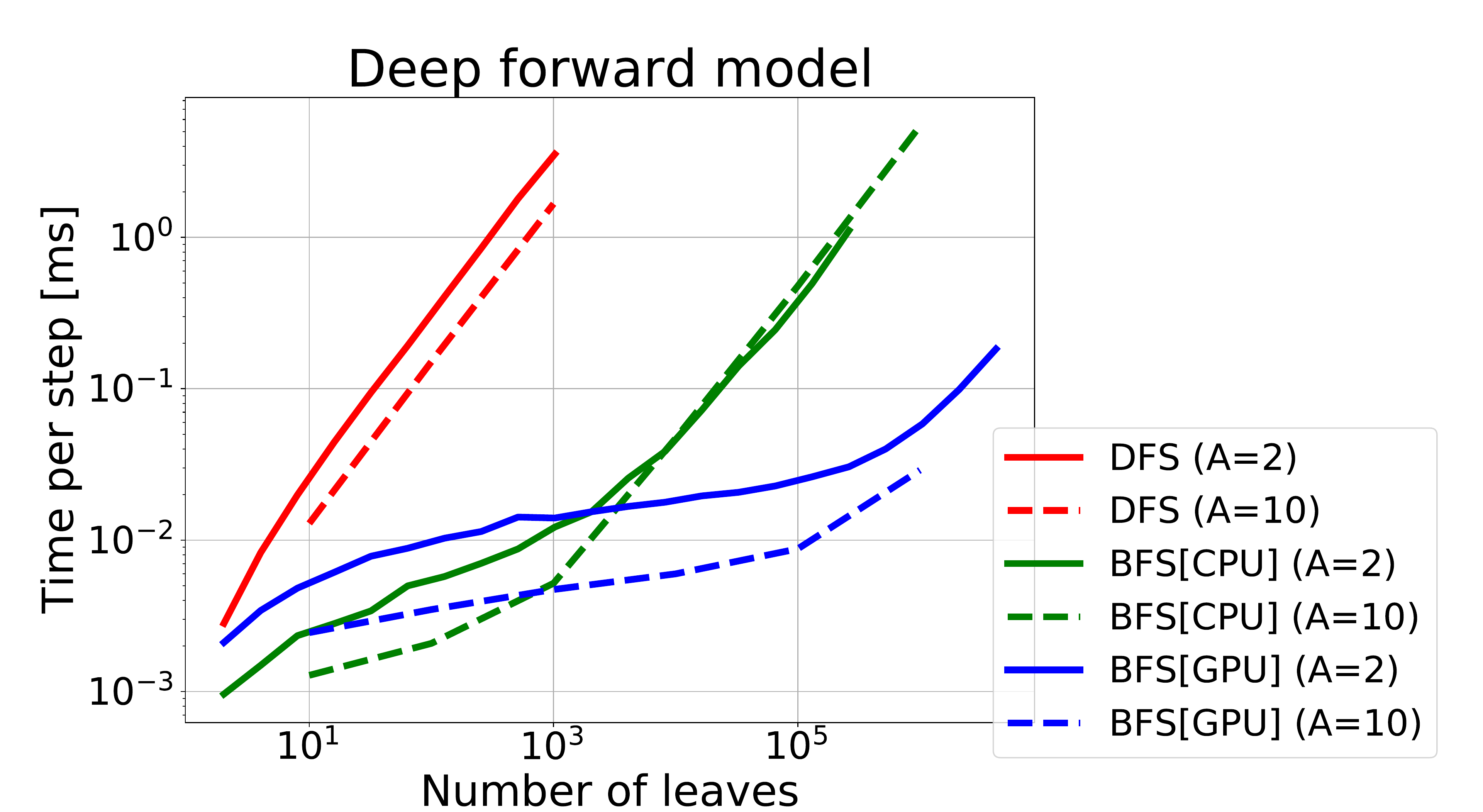}
    \caption{\textbf{Average tree-search time per action selection}. \textit{Left:} Atari-CuLE Breakout, Pong, and VideoPinball. \textit{Right:} A randomly generated neural network to mimic a learned forward-model with $A \in \{2, 10\}.$ Note that $x$ and $y$ axes are in log-scale. 
    }
    \label{fig:TimePerStep}
\end{figure}
\subsection{Runtime experiments}
To showcase the efficacy of Batch-BFS, we compare it to a CPU-based BFS and to non-parallel TS, i.e., Depth-First-Search (DFS). We measure the duration of an average single TS operation on two environments: Atari-CuLE \citep{dalton2019accelerating} and a Deep NN (DNN) mimicking a learned forward model. The DNN is implemented in pytorch and uses cudnn \cite{chetlur2014cudnn}. 
It consists of three randomly-initialized hidden layers of width $100$ with input size $100$ for the state and $2$ or $10$ for the actions. The results are given in
Fig.~\ref{fig:TimePerStep}.   
We run our experiments on a 8 core Intel(R) Xeon(R) CPU E5-2698 v4 @ 2.20GHz equipped with one NVIDIA Tesla V100 16GB. 
Although we used a single GPU for our experiments, we expect a larger parallelization (and thus computational efficiency) to be potentially achieved in the future through a multi-GPU implementation. 
As expected, DFS scales exponentially in depth and is slower than BFS. When comparing BFS on CPU vs. GPU, we see that CPU is more efficient in low depths. This is indeed expected, as performance of GPUs without parallelization is inferior to that of CPUs. This issue is often addressed by distributing the simulators across a massive number of CPU cores \cite{espeholt2018impala}. We leverage this phenomenon in Batch-BFS by finding the optimal ``cross-over depth'' per game and swap the compute device in the middle of the search from CPU to GPU. 

\section{Experiments} \label{sec:Experiments}
In this section, we report our results on two sets of experiments: the first deals solely with TS for inference, without learning, whereas the second includes the case of TS used in training. In all Atari experiments, we use frame-stacking together with frame-skipping of 4 frames, as conducted in \cite{machado2018revisiting}.

\begin{figure}[H]
    \centering
    \includegraphics[scale=0.38]{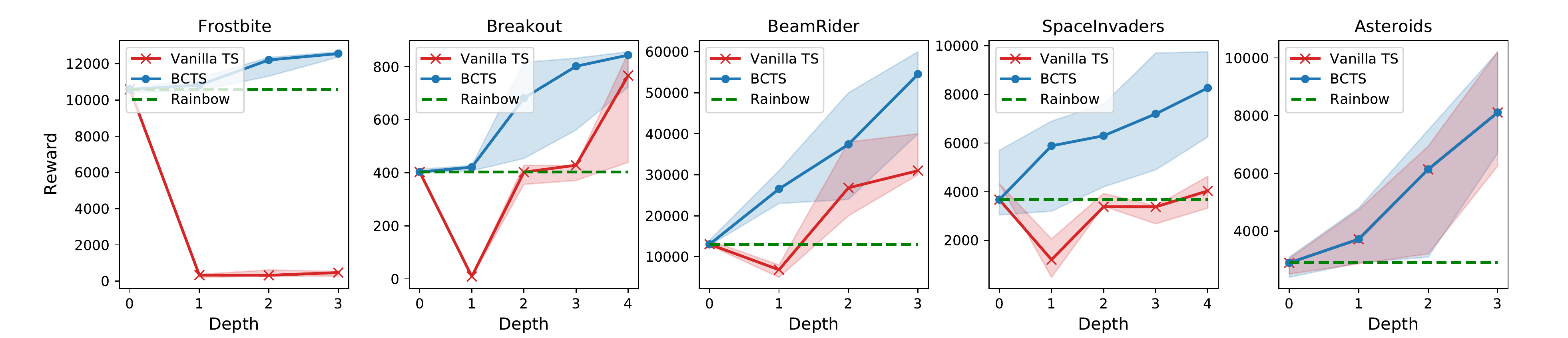}
    \caption{\textbf{Inference only: Vanilla TS vs. BCTS.} Median scores with lower ($0.25$) and upper ($0.75$) quantiles over 200 episodes, as a function of the tree depth. Surprisingly, vanilla TS often degrades the performance of the pretrained Rainbow agent.  BCTS (blue) improves upon vanilla TS (red) for all depths, except for Asteroids. The improvement grows monotonically with the tree depth.} 
    \label{fig:corrected}
\end{figure}

\subsection{Inference with tree search}
\label{sec: inference experiments}

Using the pre-trained Rainbow agents in~\cite{RainbowPretrained} (see Sec.~\ref{sec:Inference}), we test
vanilla TS and show (Fig.~\ref{fig:corrected}, red plots) that it leads to a lower score than the baseline $\pi_o$. The largest drop is for $d=1$ as supported by our analysis. The game that suffered the most is Frostbite. This can be explained from it having the largest number of actions ($A=18$), which increases the bias in \eqref{eq: bias approx}.
As for BCTS, we found that multiplying its correction term (see \eqref{eq: belmman penalty}) by a constant that we sweep over can improve performance; we applied this method for the experiments here.
 Recall that BCTS is a TS applied on the pre-trained Rainbow baseline; i.e., the case of $d=0$ is Rainbow itself. The results in Fig.~\ref{fig:corrected} show that BCTS significantly improves the scores, monotonically in depth, in all games. It improves the Rainbow baseline already for $d=1$, while for $d=4,$ the score more than doubles. For BeamRider, BCTS with $d=4$ achieves roughly $\times5$ improvement. Notice that without the computationally efficient implementation of Batch-BFS, the results for deeper trees would not have been obtainable in a practical time. We provide timing measurements per game and depth in Appendix~\ref{app: inference timing}. Finally, notice that the advantage provided by BCTS is game-specific. Different games benefit from it by a different amount. In one of the games tested, Asteroids, vanilla TS was as useful as BCTS. Our findings reported in the last paragraph of Sec.~\ref{subsec:degredation} hint why certain games benefit from BCTS more than others. Nonetheless, a more thorough study on how the dynamics and state distribution in different games affect TS constitutes an interesting topic for future research.

\subsection{Training with tree search}
\label{sec: training experiments}
To further demonstrate the potential benefits of TS once a computationally efficient implementation (Section \ref{sec:Batch-BFS}) is available, we show how it affects training agents from scratch on CuLE-Atari environments.
We extend classic DQN~\cite{mnih2013playing} with TS using Batch-BFS for each action selection. Notice that training with TS does not suffer from the distribution shift studied in Sec.~\ref{sec:Inference}. 
Hence, the experiments below use vanilla TS and not BCTS.


Our experiment is even more significant considering that Efroni et al.~\cite{efroni2018combine} recently proved that the Bellman equation should be modified so that contraction is guaranteed for tree-based policies only when the value at the leaves is backed. However, this theory was not supported by empirical evidence beyond a toy maze. As far as we know, our work is the first to adopt this modified Bellman equation to obtain favorable results in state-of-the-art domains, thanks to the computationally efficient TS implementation achieved by Batch-BFS. We find this method to be beneficial in several of the games we tested. In the experiments below, we treat the Bellman modification from \cite{efroni2018combine} as a hyper-parameter and include ablation studies of it in Appendix~\ref{app: ablation}.

We show the training scores in Table~\ref{tab:train} and convergence plots in Appendix~\ref{fig:train_convergence}. For a fair comparison of different TS depths, we stop every run after two days on the same hardware (see Appendix~\ref{app: hardware}), not considering the total iteration count. To compare our results against classic DQN and Rainbow, we measure the number of  iterations completed in two days by DQN with TS, $d=0$. In Table~\ref{tab:train} we report the corresponding intermediate results for DQN and Rainbow reported by the original authors in \cite{dqnzoo2020github}. In most games, it amounts to roughly $30$ million iterations. Note that DQN and Rainbow do not utilize the model of the environment while DQN with TS does; the former are brought as a reference for assessing the potential of using a TS with identical computation budget. 
As already shown in the case of inference with no training, the achieved score increases monotonically with the tree depth. In four of the five games, DQN with TS even surpasses the more advanced Rainbow. Since all results were obtained for identical runtime, improvement per unit of time is higher for higher depths. This essentially translates to better efficiency of compute resources. Convergence plots as a function of wall-clock time are shown in Appendix~\ref{app: wallclock exp}.
We also tested TS on two additional games not included in Table~\ref{tab:train}: Boxing and Pong. Interestingly, TS with $d=4$ immediately obtained the highest possible scores in both these games already in the first training iteration. 

\begin{table}[!t]
  \centering
  \caption{\textbf{Atari scores after two days of training.} We follow the evaluation method in \cite{hessel2018rainbow}: Average of 200 testing episodes, from the agent snapshot that obtained the highest score during training.}
  \label{tab:train}
  \begin{tabular}{l|rrrr|r|r}
    \toprule
        \multirow{2}{*}{Game} & \multicolumn{4}{c|}{DQN with TS, depth $d$} & \multirow{2}{*}{DQN~\cite{mnih2013playing}} & \multirow{2}{*}{Rainbow \cite{hessel2018rainbow}}\\
         &  $d=1$ & $d=2$ & $d=3$ & $d=4$ & &\\
    \midrule
    Asteroids     & $2,093$ & $2,613$  & $4,794$ & $\bf{17,929}$ & $1,664$ & $1,594$\\
    Breakout      & $385$ & $581$ & $420$ & $\bf{620}$   & $377$ & $327$ \\
    MsPacman       & $1,644$ & $2,923$ & $3,498$ & $\bf{4,021}$  & $2,398$  & $3,600$\\
    SpaceInvaders       & $675$ & $1,602$ & $2,132$ & $\bf{2,550}$ & $1,132$  & $2,162$ \\
    VideoPinball       & $229,129$ & $244,052$ & $442,347$ & $345,742$  & $163,720$ & $\bf{641,235}$ \\
    \bottomrule
  \end{tabular}
  \vspace{-10pt}
\end{table}

\vspace{-5pt}
\section{Related work}
\vspace{-5pt}
\label{sec: related work}
The idea of searching forward in time has been employed extensively in control theory via methods such as A* \cite{Hart1968}, RRT \cite{lavalle1998rapidly}, and MPC \cite{agachi20162}. The latter is quite popular in the context of efficient planning in RL \cite{nagabandi2018neural, williams2017information, zanon2020safe}. MPC-based controllers rely on recourse planning by solving an optimization program given the continuous structure of the dynamics.
 In our setup, the controls are discrete and the forward model is a black-box that cannot be directly used in an optimization scheme. 

We leverage an efficient GPU simulator to conduct the look-ahead search. When such a simulator is out of reach, it can be learned from data. There is vast literature regarding learned deep forward models without optimizing a policy \cite{kim2020learning, oh2015action, krishnan2015deep}. 
Other works \cite{kalweit2017uncertainty, kaiser2019model, ha2018world, racaniere2017imagination} used the learned model for planning but not with a TS. 
Few considered roll-outs of learned dynamics \cite{buckman2018sample, feinberg2018model} but only for evaluation purposes. Additional relevant works are MuZero \cite{schrittwieser2020mastering} and ``MuZero Unplugged'' \cite{schrittwieser2021online} which utilized a learned forward-model for  prediction in an MCTS-based policy. In \cite{moerland2020think}, the trade-off between learning and planning using a TS was empirically tested. Look-ahead policies for RL were also studied theoretically; bounds on the suboptimality of the learned policy were given in \cite{efroni2018beyond, efroni2018multiple, efroni2018combine}. There, the focus was on the effect of planning on the learning process.

Finally, our distribution-shift analysis and approach draw connections to similar challenges in the off-policy \cite{hasselt2010double, fujimoto2018addressing} and offline \cite{lee2021optidice} RL literature. These address the issue of overestimation in RL due to the ‘max’ operation in the Bellman equation in various ways. In our work, we show that with TS the overestimation problem is exacerbated, and can lead to counter-intuitive performance degradation. Our analysis is different from previous papers, but intuitively the core solution is similar: “be careful” of states and actions not seen during training, and penalize them accordingly.


\vspace{-5pt}
\section{Discussion}\label{sec:conclusion}
\vspace{-5pt}
Our study of the degradation with vanilla TS implies that the learned value function is not representative of the actual states that can be visited. 
This conclusion can be helpful in debugging RL systems. It can also be used to improve robustness, tune the approximation architecture, or guide exploration. 

Our solution to the above performance degradation is an off-policy correction that penalizes high-error trajectories. It can be further improved with different notions of uncertainty for the value function, e.g., Bayesian or bootstrapped models. Also, while some simulators are available on GPU, such as Atari \cite{dalton2019accelerating} and robotic manipulation \cite{liang2018gpu}, in other cases, a learned model can be used. In these cases, one could include the simulator quality in the off-policy penalty term. Finally, a limitation of our work is that we focus on problems with a discrete action space. Handling problems with continuous action tasks is a challenging direction for future work. 

\textbf{Broader Impact.} The paper proposes a method to improve existing RL algorithms. As such, its main impact is to make RL more easily and widely deployable. Since our method can be applied to policies trained with any algorithm, it can be viewed as a generic ``policy booster'', and find applications with access to the environment model or its approximation.

\bibliography{BatchBFS} 
\bibliographystyle{plain}

\newpage
\section*{Checklist}


\begin{enumerate}

\item For all authors...
\begin{enumerate}
  \item Do the main claims made in the abstract and introduction accurately reflect the paper's contributions and scope?
    \answerYes{}
  \item Did you describe the limitations of your work?
    \answerYes{See Assumptions~\ref{ass:NormQ}, \ref{ass:simpleComparison}, and Sec.~\ref{sec:conclusion}.}
  \item Did you discuss any potential negative societal impacts of your work?
    \answerYes{See the last paragraph in Section~\ref{sec:conclusion}.}
  \item Have you read the ethics review guidelines and ensured that your paper conforms to them?
    \answerYes{}
\end{enumerate}

\item If you are including theoretical results...
\begin{enumerate}
  \item Did you state the full set of assumptions of all theoretical results?
    \answerYes{See the preliminaries in Section~\ref{sec:preliminaries} and  Assumptions~\ref{ass:NormQ}, \ref{ass:simpleComparison}}
	\item Did you include complete proofs of all theoretical results?
    \answerYes{All proofs are given in Appendix~\ref{app:sec:theory}.}
\end{enumerate}

\item If you ran experiments...
\begin{enumerate}
  \item Did you include the code, data, and instructions needed to reproduce the main experimental results (either in the supplemental material or as a URL)?
    \answerNo{The code-base relies on a public repository available online \cite{RainbowPretrained} with proper citation in Section~\ref{sec:Experiments}. The algorithm and instructions for our experiments are given in Sec.~\ref{subsec:bcts}, Section~\ref{sec:Batch-BFS} and Section~\ref{sec:Experiments}. Lastly, we started the legal process of publishing our adaptations to the public repository \cite{RainbowPretrained} and will share them publicly upon acceptance.}
  \item Did you specify all the training details (e.g., data splits, hyperparameters, how they were chosen)?
    \answerYes{All training details are given in  Appendix~\ref{app:sec:experiments}.}
	\item Did you report error bars (e.g., with respect to the random seed after running experiments multiple times)?
    \answerYes{}
	\item Did you include the total amount of compute and the type of resources used (e.g., type of GPUs, internal cluster, or cloud provider)?
    \answerYes{}
\end{enumerate}

\item If you are using existing assets (e.g., code, data, models) or curating/releasing new assets...
\begin{enumerate}
  \item If your work uses existing assets, did you cite the creators?
    \answerYes{}
  \item Did you mention the license of the assets?
    \answerYes{Available in the corresponding references.}
  \item Did you include any new assets either in the supplemental material or as a URL?
    \answerNo{We started the legal process of releasing our extensions to the public repositories we rely on, and will publicly share them as new repositories upon acceptance.}
  \item Did you discuss whether and how consent was obtained from people whose data you're using/curating?
    \answerNA{}
  \item Did you discuss whether the data you are using/curating contains personally identifiable information or offensive content?
    \answerNA{}
\end{enumerate}

\item If you used crowdsourcing or conducted research with human subjects...
\begin{enumerate}
  \item Did you include the full text of instructions given to participants and screenshots, if applicable?
    \answerNA{}
  \item Did you describe any potential participant risks, with links to Institutional Review Board (IRB) approvals, if applicable?
    \answerNA{}
  \item Did you include the estimated hourly wage paid to participants and the total amount spent on participant compensation?
    \answerNA{}
\end{enumerate}

\end{enumerate}

\newpage
\appendix

\section{Theory: Proofs and Complimentary Material}\label{app:sec:theory}

\subsection{Proof of Lemma \ref{lem:Qdists}}
\label{sec: GEV lemma proof}
\begin{lemma*}
It holds that
\begin{align*} \label{eq:Qdist}
    \hQod(s, \pi_o(s)) = R_o(s) + \gamma^d G_o(s),\quad
    \max_{a\ne\pi_o(s)}\hQod(s, a) = R_e(s) + \gamma^d G_e(s),
\end{align*}
with
\begin{equation*}
    G_o(s) \sim \text{GEV}(\mu^{\text{GEV}}_o(s), \sigma^{\text{GEV}}_o, 0), \quad G_e(s) \sim \text{GEV}(\mu^{\text{GEV}}_e(s), \sigma^{\text{GEV}}_e, 0),
\end{equation*}
where GEV is the Generalized Extreme Value distribution and \begin{align}
    \mu^{\text{GEV}}_o(s) &= \mu_o(s) + \sigma_o  \Phi^{-1}\left( 1 - \frac{1}{A^{d-1}}\right),  \nonumber \\
    \sigma^{\text{GEV}}_o &=\sigma_o \left[\Phi^{-1}\left(1 - \frac{1}{eA^{d-1}}\right) - \Phi^{-1}\left(1 - \frac{1}{A^{d-1}}\right) \right],  \nonumber \\
    \mu^{\text{GEV}}_e(s) &= \mu_e(s) + \sigma_e  \Phi^{-1}\left( 1 - \frac{1}{A^d - A^{d-1}}\right), \nonumber \\
    \sigma^{\text{GEV}}_e &= \sigma_e \left[\Phi^{-1}\left(1 - \frac{1}{e(A^d -  A^{d-1})}\right) - \Phi^{-1}\left(1 - \frac{1}{A^d - A^{d-1}}\right) \right]. \label{def: sigma GEV} 
\end{align}
The function $\Phi^{-1}$ is the inverse of the CDF of the standard normal distribution. 
\end{lemma*}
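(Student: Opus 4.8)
The plan is to reduce the maximum over the $A^d$ leaves to a maximum over i.i.d. Gaussians and then apply the standard block-maxima result from extreme value theory. First I would use Assumption~\ref{ass:simpleComparison} to factor each $\hQod(s,a)$: for the sub-tree rooted at $a_0=\pi_o(s_0)$ every leaf contributes $R_o(s)+\gamma^d \hQo(s_d,a_d)$ where, by Assumption~\ref{ass:NormQ}, the leaf values $\hQo(s_d,a_d)$ are $\mathcal{N}(\mu_o(s),\sigma_o^2)$; similarly the sub-trees with $a_0\neq\pi_o(s_0)$ give $R_e(s)+\gamma^d\hQo(s_d,a_d)$ with leaf values $\mathcal{N}(\mu_e(s),\sigma_e^2)$. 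Crucially, Assumption~\ref{ass:NormQ} has each distinct leaf state $s_d$ carrying its own independent draw, so within each sub-tree the $A^{d-1}$ leaf values following a fixed $a_0$ are i.i.d.; and the number of leaves in the $\pi_o$ sub-tree is $A^{d-1}$ while the number in the complementary family is $A^d-A^{d-1}$. Hence $\hQod(s,\pi_o(s))=R_o(s)+\gamma^d\max_{i\le A^{d-1}}X_i$ and $\max_{a\neq\pi_o(s)}\hQod(s,a)=R_e(s)+\gamma^d\max_{j\le A^d-A^{d-1}}Y_j$ with $X_i\stackrel{\text{iid}}{\sim}\mathcal{N}(\mu_o(s),\sigma_o^2)$ and $Y_j\stackrel{\text{iid}}{\sim}\mathcal{N}(\mu_e(s),\sigma_e^2)$.

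Next I would invoke the classical Fisher–Tippett–Gnedenko result: the Gaussian lies in the max-domain of attraction of the Gumbel law, i.e.\ the GEV with shape parameter $\xi=0$, which explains the ``$0$'' in the third slot of $\text{GEV}(\cdot,\cdot,0)$. The standard normalizing constants for the maximum of $n$ i.i.d.\ $\mathcal{N}(\mu,\sigma^2)$ variables can be written, following the textbook choice used in e.g.\ \cite{coles2001introduction}, as location $b_n=\mu+\sigma\,\Phi^{-1}(1-1/n)$ and scale $a_n=\sigma\big[\Phi^{-1}(1-1/(en))-\Phi^{-1}(1-1/n)\big]$; these arise from solving $n(1-\Phi((x-\mu)/\sigma))=1$ for the location and matching the quantile at $e^{-1}$ for the scale. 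Substituting $n=A^{d-1}$ with $(\mu,\sigma)=(\mu_o(s),\sigma_o)$ yields exactly $\mu^{\text{GEV}}_o(s)$ and $\sigma^{\text{GEV}}_o$, and substituting $n=A^d-A^{d-1}$ with $(\mu,\sigma)=(\mu_e(s),\sigma_e)$ yields $\mu^{\text{GEV}}_e(s)$ and $\sigma^{\text{GEV}}_e$. Defining $G_o(s)$ and $G_e(s)$ to be these (approximately) GEV-distributed block maxima gives the claimed decomposition.

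The main obstacle is interpretational rather than computational: the Fisher–Tippett limit is asymptotic in the block size $n$, so for finite $A^{d-1}$ the leaf maxima are only approximately GEV-distributed. I would therefore be explicit that Lemma~\ref{lem:Qdists} (and consequently Lemmas~\ref{lem:bias}, \ref{lem:approx_correction} and Theorems~\ref{thm:corrected_policy}, \ref{thm: approx prob bound}) is to be read as the GEV approximation to the exact distribution of the maximum of finitely many Gaussians, with the particular normalizing sequences fixed as above; alternatively one can note that these $a_n,b_n$ are the exact constants making the normalized maximum converge, and state the result as an equality in the limiting-distribution sense. A secondary point requiring care is the independence structure: Assumption~\ref{ass:NormQ} must be read as asserting not only the marginal law of each $\hQo(s_d,a_d)$ but also mutual independence across distinct leaf states, which is what licenses treating each sub-tree's leaves as an i.i.d.\ block; I would state this reading explicitly before applying the EVT theorem. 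Everything else is a direct substitution of $n=A^{d-1}$ and $n=A^d-A^{d-1}$ into the known normalizing constants.
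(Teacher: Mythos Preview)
Your proposal is correct and follows essentially the same route as the paper: reduce each branch to a maximum of i.i.d.\ Gaussians via Assumptions~\ref{ass:NormQ} and~\ref{ass:simpleComparison}, count $A^{d-1}$ versus $A^d-A^{d-1}$ leaves, and then invoke the Gumbel domain-of-attraction result from \cite{coles2001introduction} with the stated normalizing constants. If anything you are more careful than the paper, which silently treats the finite-block GEV law as exact and leaves the cross-leaf independence implicit; your explicit remarks on both points are appropriate and do not change the argument.
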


\begin{proof}
For $1 \leq i \leq A^{d-1},$ let $ N_o^{(i)}$ be independent random variables distributed ${N_o^{(i)}\sim \mathcal{N}(\mu_o(s), \sigma_o^2)}.$ 
According to \eqref{eq:dStepQ}, we have that
\begin{equation*}
    \hQod(s, a) = \Bigg[ \max_{(a_k)_{k=1}^d \in \A} \Bigg[ \sum_{t=0}^{d-1} \gamma^t r(s_t, a_t) \bigg] + \gamma^d \hQo(s_d, a_d) \Bigg]_{s_0=s, a_0=a}.
\end{equation*}
 For the case of $a=\pi_o(s),$ using Assumption~\ref{ass:simpleComparison}, we can replace the cumulative reward above with $R_o(s)$ and be left with 
\begin{align*}
    \hQod(s, \pi_o(s)) & = R_o(s) + \gamma^d  \max_{(a_k)_{k=1}^d \in \A} \hQo(s_d, a_d) \big|_{s_0=s, a_0=\pi_o(s)} \\
    & = R_o(s) + \gamma^d  \max_{1 \leq i \leq A^{d-1}} N_o^{(i)} \\
    & = R_o(s) + \gamma^d  G_o(s),
\end{align*}
where the second relation is due to Assumptions~\ref{ass:NormQ} together with \ref{ass:simpleComparison}, and the third follows from the maximum of $\{N_o^{(i)}\}_{i=1}^{A^{d-1}}$ being GEV-distributed with the parameters in the statement, derived as in \cite{coles2001introduction}. 
Similarly, for $a_0\ne\pi_0(s),$ we will have $A^d - A^{d-1}$ iid variables ${N_e^{(i)} \sim \mathcal{N}(\mu_e(s), \sigma_e^2)}$ and, consequently, $G_e(s)$ as given in the statement.
\end{proof}

\subsection{Proof of Lemma \ref{lem:bias}}
\label{app:GEVparams}
\begin{lemma*}
It holds that
\begin{align*}
    \E\left[\hQod(s, \pi_o(s))\right] &= \Qod(s, \pi_o(s)) + \gamma^d B_o(\sigma_o, A, d), \\ 
    \E\left[\max_{a\ne\pi_o(s)}\hQod(s, a)\right] &= \Qod(s, a\neq \pi_o(s)) + \gamma^d B_e(\sigma_e, A, d),
\end{align*}
where the biases $B_o$ and $B_e,$ satisfying $0 \leq B_o(\sigma_o, A, d) < B_e(\sigma_e, A, d),$  are given by 
\begin{align} 
B_o(\sigma_o,A,d) &= \begin{cases}
0 & \mbox{if } d=1, \\
\sigma_o \Phi^{-1}\left( 1 - \frac{1}{A^{d-1}}\right) + \gamma_{\text{EM}}  \sigma^{\text{GEV}}_o & \mbox{otherwise},
\end{cases}  \nonumber \\
    B_e(\sigma_e,A,d) &=  \begin{cases} 
    0, & \mbox{if } d=1 \mbox{ and } A=2 \\
    \sigma_e\Phi^{-1}\left( 1 - \frac{1}{A^d - A^{d-1}}\right) +  \gamma_{\text{EM}} \sigma^{\text{GEV}}_e, & \mbox{otherwise}. 
    \end{cases} \label{def: Be} 
\end{align}
The constant $\gamma_{\text{EM}} \approx 0.58$ is the Euler–Mascheroni constant.
\end{lemma*}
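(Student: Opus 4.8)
The plan is to evaluate the two expectations by pushing them through the decomposition of Lemma~\ref{lem:Qdists} and to recognize each bias as the mean of a Gumbel variable. A $\text{GEV}(\mu,\sigma,0)$ random variable has mean $\mu+\gamma_{\text{EM}}\sigma$, so, since $R_o(s)$ is deterministic,
\begin{equation*}
\E\!\left[\hQod(s,\pi_o(s))\right]=R_o(s)+\gamma^d\E[G_o(s)]=R_o(s)+\gamma^d\!\left(\mu^{\text{GEV}}_o(s)+\gamma_{\text{EM}}\sigma^{\text{GEV}}_o\right),
\end{equation*}
and likewise for the ``$e$'' branch with $\mu^{\text{GEV}}_e(s),\sigma^{\text{GEV}}_e$. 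Substituting the explicit $\mu^{\text{GEV}}_o(s)$ and $\mu^{\text{GEV}}_e(s)$ from Lemma~\ref{lem:Qdists} isolates the terms $\gamma^d\mu_o(s)$ and $\gamma^d\mu_e(s)$ inside those expectations.

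Next I would identify the deterministic baselines. By Assumption~\ref{ass:simpleComparison}, when $a_0=\pi_o(s)$ every trajectory in the tree contributes cumulative reward $R_o(s)$ and leaf value $\mu_o(s)$, so the inner maximization over $(a_k)_{k=1}^d$ in \eqref{eq:dStepQ} is vacuous and $\Qod(s,\pi_o(s))=R_o(s)+\gamma^d\mu_o(s)$; symmetrically $\max_{a\neq\pi_o(s)}\Qod(s,a)=R_e(s)+\gamma^d\mu_e(s)$. Subtracting these leaves exactly $\gamma^d B_o$ and $\gamma^d B_e$ with $B_o=\sigma_o\Phi^{-1}(1-A^{-(d-1)})+\gamma_{\text{EM}}\sigma^{\text{GEV}}_o$ and the analogue for $B_e$ with $A^d-A^{d-1}$ replacing $A^{d-1}$. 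The case $d=1$ requires a separate line, because there the relevant max is over a single Gaussian, i.e.\ no maximization at all: then $\E[G_o(s)]=\mu_o(s)$ and $B_o=0$, which is why the closed form (formally $\Phi^{-1}(0)=-\infty$) is replaced by $0$; the same remark yields $B_e=0$ when $d=1$ and $A=2$, the only case in which the ``$e$'' sub-tree also has a single leaf.

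For the chain $0\le B_o<B_e$ it is cleanest to write $B_o=\E[\max_{1\le i\le A^{d-1}}N_o^{(i)}]-\mu_o(s)=\sigma_o\,c(A^{d-1})$, where $N_o^{(i)}$ are i.i.d.\ $\mathcal{N}(\mu_o(s),\sigma_o^2)$ and $c(n):=\E[\max_{1\le i\le n}Z_i]$ for i.i.d.\ standard normals $Z_i$; similarly $B_e=\sigma_e\,c(A^d-A^{d-1})$. Then $B_o\ge0$ because $c(n)\ge0$ (convexity of $\max$ with Jensen, equality only at $n=1$), and $B_o<B_e$ follows from two monotonicities: $c$ is nondecreasing in $n$ (strictly increasing for $n\ge2$), so the larger leaf count $A^d-A^{d-1}=A^{d-1}(A-1)\ge A^{d-1}$ can only raise it, while $\sigma_o<\sigma_e$ strictly inflates the prefactor, giving $\sigma_e c(A^d-A^{d-1})>\sigma_o c(A^{d-1})$ as long as $c(A^{d-1})>0$, i.e.\ outside the degenerate $d=1,A=2$ case. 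Within the paper's model $c(n)=\Phi^{-1}(1-1/n)+\gamma_{\text{EM}}[\Phi^{-1}(1-1/(en))-\Phi^{-1}(1-1/n)]$, which matches the stated closed forms for $B_o,B_e$.

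The main obstacle I anticipate is the bookkeeping at the $d=1$ boundary, where the GEV parametrization degenerates ($\Phi^{-1}(0)=-\infty$) and must be swapped for the direct single-Gaussian computation, together with making the strict inequality $B_o<B_e$ airtight: it rests simultaneously on variance inflation ($\sigma_o<\sigma_e$) and leaf-count inflation ($A\ge2$), so one must check that when $A=2$ the two leaf counts coincide and strictness then comes purely from $\sigma_o<\sigma_e$, and that $c$ is invoked only where it is genuinely increasing. Everything else is substitution of the Lemma~\ref{lem:Qdists} parameters into the Gumbel mean.
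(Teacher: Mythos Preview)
Your proposal is correct and mirrors the paper's proof: decompose via Lemma~\ref{lem:Qdists}, apply the Gumbel mean $\mu^{\text{GEV}}+\gamma_{\text{EM}}\sigma^{\text{GEV}}$, identify $\Qod$ through Assumption~\ref{ass:simpleComparison}, and treat the single-leaf cases ($d=1$; $d=1,A=2$) separately where the GEV parametrization degenerates. Your ordering argument $B_o=\sigma_o\,c(A^{d-1})<\sigma_e\,c(A^d-A^{d-1})=B_e$ is exactly the factorization the paper uses (it writes $B(n)$ for your $c(n)$ and argues positivity and monotonicity of the closed form directly rather than via Jensen), and you correctly isolate the $A=2$ case where the two leaf counts coincide and strictness must come solely from $\sigma_o<\sigma_e$.
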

\begin{proof}
First, obviously, the maximum over a set containing a single random variable has the distribution of that single element. Hence, there is no overestimation bias in the single-element case; i.e., the bias is $0$ for the sub-tree of $a=\pi_o(s)$ with $d=1,$ and the sub-tree of $a\ne\pi_o(s)$ with $d=1$ and $A=2.$

Next, let $X$ be s.t. $X\sim \text{GEV}(\mu^{\text{GEV}}, \sigma^{\text{GEV}}, 0).$ From \cite{coles2001introduction}, we have that
 \begin{equation}
 \label{eq: GEV exp}
     \E \left[ X \right] = \mu^{\text{GEV}} + \gamma_{\text{EM}} \sigma^{\text{GEV}}. 
 \end{equation}
Applying Lemma~\ref{lem:Qdists}, we have that
 \begin{align}
     \E\left[\hQod(s, \pi_o(s))\right] &= \E\left[R_o(s) + \gamma^d G_o(s)\right] \nonumber \\
     &= R_o(s) + \gamma^d\E\left[G_o(s)\right] \label{eq:trans:detR}\\ 
     &= R_o(s) + \gamma^d\left[\mu_o(s) + \sigma_o \Phi^{-1}\left(1 - \frac{1}{A^{d-1}}\right) + \gamma_{\text{EM}}\sigma^{\text{GEV}}_o\right] \label{eq:trans:bias_lemma}\\ 
     &= \Qod(s, \pi_o(s)) + \gamma^d B_o(\sigma_o, A, d), \label{eq: biased exp}
 \end{align}
 where relation \eqref{eq:trans:detR} is due to the rewards being deterministic (Assumption~\ref{ass:simpleComparison}), relation \eqref{eq:trans:bias_lemma} follows from Lemma \ref{lem:Qdists} together with \eqref{eq: GEV exp}, and relation \eqref{eq: biased exp} is due to the definition of $\Qod(s, \pi_o(s))$ in \eqref{eq:dStepQ}, together with Assumptions~\ref{ass:NormQ} and \ref{ass:simpleComparison}. The calculation for the expectation $\E\left[\max_{a\ne\pi_o(s)}\hQod(s, a)\right]$ follows the same steps.
 
 Next, we show that 
 \begin{equation}
 \label{eq: bias inequality}
 0 \leq B_o(\sigma_o, A, d)< B_e(\sigma_e, A, d).    
 \end{equation}
 For this, let us define 
  \begin{equation*}
    B(n) = \begin{cases} 
    0 & n = 1 \\
    \gamma_{\text{EM}}\Phi^{-1}\left(1 - \frac{1}{en}\right) + (1 - \gamma_{\text{EM}})\Phi^{-1}\left( 1 - \frac{1}{n}\right), & n\neq 1.
    \end{cases}
 \end{equation*}
 Notice we now have $B_o(\sigma_o, A, d)=\sigma_o B(A^{d-1})$ and $B_e(\sigma_e, A, d)=\sigma_e B(A^d - A^{d-1}).$ Since by Assumption~\ref{ass:NormQ} we have that $\sigma_e > \sigma_o > 0,$ to prove \eqref{eq: bias inequality} it is sufficient to show that ${0\leq{B(A^{d-1}) < B(A^d - A^{d-1})}}.$ Since $B(n)$ is composed of two positive monotonically increasing functions, it is a positive monotonically increasing function. So, whenever $A^{d-1} < A^d - A^{d-1},$ we also have that $0\leq {B(A^{d-1}) < B(A^d - A^{d-1})}$ and, consequently, \eqref{eq: bias inequality}. Taking a log, we see it is indeed the case for $A>2.$ For $A=2,$ we have equality and ${B(A^{d-1}) = B(A^d - A^{d-1})}.$ But then, \eqref{eq: bias inequality} holds again, since $\sigma_e > \sigma_o > 0.$

 
\end{proof}



\subsection{Proof of Theorem \ref{thm:corrected_policy}}
\begin{theorem*}
The relation 
$
 \E \Bigg[ \hat{Q}^{\text{BCTS},\pi_o}_d(s,\pi_o(s)) \bigg] > \E \Bigg[\max_{a\neq\pi_o(s)} \hat{Q}^{\text{BCTS},\pi_o}_d(s,a) \bigg],
 $
holds if and only if $\Qod(s, \pi_o(s)) > \max_{a \neq \pi_o(s)}\Qod(s, a).$
\end{theorem*}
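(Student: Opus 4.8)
The plan is to evaluate both expectations in closed form using Lemma~\ref{lem:bias} and the observation that the BCTS correction subtracts the \emph{same} deterministic constant from every off-policy branch; the inequality then collapses to the corresponding inequality on the exact $d$-step values, in both directions simultaneously, since every intermediate step is an equality.

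First I would treat the left-hand side. By the definition in \eqref{eq: corrected q} the correction leaves the $a_0=\pi_o(s_0)$ branch unchanged, so $\hat{Q}^{\text{BCTS},\pi_o}_d(s,\pi_o(s)) = \hQod(s,\pi_o(s))$, and Lemma~\ref{lem:bias} gives
\[
\E\!\left[\hat{Q}^{\text{BCTS},\pi_o}_d(s,\pi_o(s))\right] = \Qod(s,\pi_o(s)) + \gamma^d B_o(\sigma_o,A,d).
\]
Next I would treat the right-hand side. Every root action $a\neq\pi_o(s)$ falls into the ``otherwise'' case of \eqref{eq: corrected q}, in which the same constant $\gamma^d\!\left(B_e(\sigma_e,A,d) - B_o(\sigma_o,A,d)\right)$ is subtracted irrespective of $a$. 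Since subtracting a constant commutes with the maximum, $\max_{a\neq\pi_o(s)}\hat{Q}^{\text{BCTS},\pi_o}_d(s,a) = \max_{a\neq\pi_o(s)}\hQod(s,a) - \gamma^d(B_e - B_o)$; taking expectations and invoking Lemma~\ref{lem:bias} again yields
\[
\E\!\left[\max_{a\neq\pi_o(s)}\hat{Q}^{\text{BCTS},\pi_o}_d(s,a)\right] = \max_{a\neq\pi_o(s)}\Qod(s,a) + \gamma^d B_e - \gamma^d(B_e - B_o) = \max_{a\neq\pi_o(s)}\Qod(s,a) + \gamma^d B_o.
\]

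Finally I would compare the two displays: the additive term $\gamma^d B_o(\sigma_o,A,d)$ is identical on both sides, so (using $\gamma^d>0$) the asserted expectation inequality is equivalent to $\Qod(s,\pi_o(s)) > \max_{a\neq\pi_o(s)}\Qod(s,a)$, and because each step above is an equality the ``if and only if'' holds at once. There is no real obstacle here beyond bookkeeping; the one point worth stating carefully is that the outer maximum ranges over root actions $a\neq\pi_o(s)$, all of which lie in the single ``otherwise'' branch of \eqref{eq: corrected q}, so exactly one constant is removed uniformly and the commutation with $\max$ is legitimate. All the genuine work — deriving the GEV parameters and the bias expressions $B_o,B_e$, and hence the two expectation identities — has already been carried out in Lemmas~\ref{lem:Qdists} and~\ref{lem:bias}.
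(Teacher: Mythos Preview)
Your proposal is correct and follows essentially the same route as the paper's own proof: apply the definition \eqref{eq: corrected q} to each branch, invoke Lemma~\ref{lem:bias} for the two expectations, and observe that the common additive term $\gamma^d B_o$ cancels. Your explicit remark that the constant correction commutes with the $\max$ over $a\neq\pi_o(s)$ is exactly the right justification for the ``otherwise'' branch and makes the argument slightly cleaner than the appendix version.
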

\begin{proof}
For the case of $a=\pi_o(s),$ we have
 \begin{equation*}
     \E \Bigg[ \hat{Q}^{\text{BCTS},\pi_o}_d(s,\pi_o(s)) \bigg] = \E \Bigg[ \hQod(s,\pi_o(s)) \bigg] = \Qod(s,\pi_o(s)) + \gamma^d B_o(\sigma_o, A, d),
 \end{equation*}
 where the first relation holds by the definition in \eqref{eq: corrected q} and the second relation follows from Lemma \ref{lem:bias}. Using the same arguments, for $a\neq\pi_o(s),$
 \begin{align*}
     &\E \Bigg[ \max_{a\neq\pi_o(s)}\hat{Q}^{\text{BCTS},\pi_o}_d(s,a) \bigg]  \\
     = {} & \E \Bigg[ \max_{a\ne\pi_o(s)}\hQod(s,a) \bigg] - \gamma^d \left[ B_e(\sigma_e, A, d) - B_o(\sigma_o, A, d) \right] + \gamma^d B_e(\sigma_e, A, d)  \\
     = {} & \max_{a\ne\pi_o(s)}\Qod(s,a)  + \gamma^d B_o(\sigma_o, A, d),
 \end{align*}
 and the result immediately follows.
 
\end{proof}

\subsection{Proof of Lemma \ref{lem:approx_correction}}
\begin{lemma*}
   When $A^{d-1} \gg 1$, the correction term in \eqref{eq: corrected q} can be approximated with
\begin{equation}
B_e(\sigma_e, A, d) - B_o(\sigma_o, A, d) \approx \sqrt{2\log A}\left(\sigma_e\sqrt{d} - \sigma_o\sqrt{d-1}\right) - (\sigma_e - \sigma_o) / 2.
\end{equation}
\end{lemma*}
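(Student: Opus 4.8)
The plan is to start from the exact expressions for $B_o$ and $B_e$ given in \eqref{def: Be} (Appendix~\ref{app:GEVparams}), write each as $\sigma_o B(A^{d-1})$ and $\sigma_e B(A^d - A^{d-1})$ using the auxiliary function $B(n)$ from the proof of Lemma~\ref{lem:bias}, and then asymptotically expand $B(n)$ for large $n$. The key analytic input is the tail behavior of the Gaussian quantile function: as $p \to 1$, $\Phi^{-1}(p) \sim \sqrt{2\log\frac{1}{1-p}}$, with the standard refinement $\Phi^{-1}(1 - x) = \sqrt{2\log(1/x)} - \frac{\log\log(1/x) + \log 4\pi}{2\sqrt{2\log(1/x)}} + o(\cdot)$. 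First I would apply this to the two quantiles appearing in $B(n)$, namely $\Phi^{-1}(1 - 1/n)$ and $\Phi^{-1}(1 - 1/(en))$, obtaining $\Phi^{-1}(1-1/n) \approx \sqrt{2\log n}$ and $\Phi^{-1}(1 - 1/(en)) \approx \sqrt{2\log n + 2} = \sqrt{2\log n}\sqrt{1 + 1/\log n} \approx \sqrt{2\log n} + \frac{1}{\sqrt{2\log n}}$ to leading orders.

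Next I would substitute these into $B(n) = \gamma_{\text{EM}}\Phi^{-1}(1 - \tfrac{1}{en}) + (1-\gamma_{\text{EM}})\Phi^{-1}(1 - \tfrac{1}{n})$. The leading term is $\gamma_{\text{EM}}\sqrt{2\log n} + (1-\gamma_{\text{EM}})\sqrt{2\log n} = \sqrt{2\log n}$, so the Euler--Mascheroni constant drops out at leading order. The next-order term is $\gamma_{\text{EM}} \cdot \frac{1}{\sqrt{2\log n}}$, which is lower order than $\sqrt{2\log n}$ and should be absorbed; I expect the $-1/2$-type constant correction in the claimed formula to come from being slightly more careful here — specifically, the stated approximation seems to effectively replace $\sqrt{2\log n}$ by something like $\sqrt{2\log n} - \tfrac12$ in an appropriate regime, or the $-(\sigma_e-\sigma_o)/2$ term arises from the $\gamma_{\text{EM}}\approx 1/2$-weighted shift between the two quantiles. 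I would then plug $n = A^{d-1}$ (giving $\sqrt{2(d-1)\log A}= \sqrt{2\log A}\sqrt{d-1}$) and $n = A^d - A^{d-1} = A^{d-1}(A-1)$ (giving $\sqrt{2\log(A^{d-1}(A-1))} = \sqrt{2\log A}\sqrt{d-1 + \log_A(A-1)} \approx \sqrt{2\log A}\sqrt{d}$, since $\log_A(A-1) \to 1$ and is close to $1$), multiply by $\sigma_o$ and $\sigma_e$ respectively, and subtract.

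The main obstacle, I expect, is bookkeeping the constant term $-(\sigma_e - \sigma_o)/2$ cleanly: the leading $\sqrt{d\log A}$ terms are straightforward, but getting the $O(1)$ correction right requires tracking the subleading term in the quantile expansion and the approximation $A^d - A^{d-1} \approx A^d$ consistently, and arguing that the discarded terms (e.g. $\gamma_{\text{EM}}/\sqrt{2\log n}$ and the $\log\log$ corrections) are genuinely negligible under $A^{d-1}\gg 1$. Since the lemma only claims an approximation (``$\approx$'') rather than a rigorous asymptotic with error bounds, I would present the derivation at the level of a leading-order expansion, note which terms are dropped, and remark that numerical comparison (deferred to Appendix~\ref{app:subsec:approx}) confirms the approximation is accurate for $d \geq 2$.
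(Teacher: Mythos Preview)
Your overall strategy matches the paper's: write $B_o=\sigma_o B(A^{d-1})$, $B_e=\sigma_e B(A^d-A^{d-1})$, approximate $A^d-A^{d-1}\approx A^d$, expand the Gaussian quantile for large argument, and drop the $\gamma_{\text{EM}}\sigma^{\text{GEV}}$ contribution as lower order. The one genuine gap is the source of the constant $-(\sigma_e-\sigma_o)/2$. Your speculation that it arises from a ``$\gamma_{\text{EM}}\approx 1/2$-weighted shift between the two quantiles'' is not how the paper obtains it, and in fact that route does not produce an $O(1)$ constant: the difference $\Phi^{-1}(1-\tfrac{1}{en})-\Phi^{-1}(1-\tfrac{1}{n})$ is of order $1/\sqrt{\log n}$, so the $\gamma_{\text{EM}}$-weighted combination only contributes $\gamma_{\text{EM}}/\sqrt{2\log n}$, which the paper explicitly discards.

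The missing ingredient is that the paper does \emph{not} use the pure leading-order asymptotic $\Phi^{-1}(1-1/n)\approx\sqrt{2\log n}$; it uses the refined empirical approximation
\[
\Phi^{-1}\!\left(1-\tfrac{1}{n}\right)\approx\sqrt{2\log n}-0.5,
\]
attributed to \cite{blair1976rational}. With this in hand the derivation is immediate: $B_e\approx\sigma_e\bigl(\sqrt{2d\log A}-0.5\bigr)$ and $B_o\approx\sigma_o\bigl(\sqrt{2(d-1)\log A}-0.5\bigr)$, and subtracting gives exactly the claimed formula, with the $-0.5$ in each line producing the $-(\sigma_e-\sigma_o)/2$ term. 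Note that this $-0.5$ is a heuristic constant shift rather than a term of the rigorous asymptotic expansion (the true second-order correction $-\tfrac{\log\log n+\log 4\pi}{2\sqrt{2\log n}}$ vanishes as $n\to\infty$), which is why the lemma is stated as ``$\approx$'' and validated numerically in Appendix~\ref{app:subsec:approx}. Once you adopt this quantile approximation, your plan goes through without further difficulty.
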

\begin{proof}
 In the following, we apply the approximation
 \begin{equation}
    \label{eq: probit approximation}
     \Phi^{-1}\left(1 - \frac{1}{n}\right) \approx \sqrt{2\log n} - 0.5
 \end{equation}
 from \cite{blair1976rational}, which we empirically show to be highly accurate in Appendix~\ref{app:subsec:approx}.
 
 By definition \eqref{def: sigma GEV},
 \begin{align}
 \sigma^{\text{GEV}}_e &= \sigma_e\left[\Phi^{-1}\left(1 - \frac{1}{e(A^d -  A^{d-1})}\right) - \Phi^{-1}\left(1 - \frac{1}{A^d - A^{d-1}}\right) \right] \nonumber \\
 &\approx \sigma_e \left[\Phi^{-1}\left(1 - \frac{1}{eA^d}\right) - \Phi^{-1}\left(1 - \frac{1}{A^d}\right) \right] \label{eq: first approx} \\
 &\approx \sigma_e \left[\sqrt{2\log\left(eA^d\right)} - \sqrt{2\log\left(A^d\right)} \right] \label{eq: second approx} \\
 &= \sigma_e \frac{2\log(eA^d) - 2\log A^d}{\sqrt{2\log\left(eA^d)\right)} + \sqrt{2\log A^d}} \nonumber \\
&\approx  \frac{\sigma_e}{\sqrt{2d\log A}}, \label{eq: sigma e GEV approx}
 \end{align}
 where in \eqref{eq: first approx} we applied \eqref{eq: probit approximation}; in relation \eqref{eq: second approx} we use that $A^d - A^{d-1} \approx A^d$ since $A^d \gg 1;$ and in relation \eqref{eq: sigma e GEV approx} we approximate $1 + \log(A^d) \approx \log(A^d),$ again because $A^d \gg 1.$
 
When $A^d \gg 1,$ the second case of \eqref{def: Be} holds, and thus 
 \begin{align*}
     B_e(\sigma_e, A, d) &=  \sigma_e  \Phi^{-1}\left( 1 - \frac{1}{A^d - A^{d-1}}\right) + \gamma_{\text{EM}} \sigma^{\text{GEV}}_e\\ 
     &\approx \sigma_e \left(\sqrt{2d\log A} - 0.5\right) + \gamma_{\text{EM}} \frac{\sigma_e}{\sqrt{2d\log A}} \\
     &\approx \sigma_e \left(\sqrt{2d\log A} - 0.5\right),
 \end{align*}
 where the second relation follows from \eqref{eq: probit approximation} and \eqref{eq: sigma e GEV approx}, while for the third relation we approximate $\sqrt{2\log (A^d)} (1 + \frac{\gamma_{\text{EM}}}{{2\log (A^d)}}) \approx \sqrt{2\log (A^d)}$ since $A^d \gg 1$ (recall that $\gamma_{\text{EM}} \approx 0.58$).
 
 Applying the same derivation for $B_o$ gives that
 \begin{equation}\label{eq: b_o_approx}
     B_o(\sigma_o, A, d) \approx \sigma_o \left(\sqrt{2(d-1)\log A} - 0.5\right),
 \end{equation}
 and the result follows directly.
\end{proof}

\subsection{Proof of Theorem~\ref{thm: approx prob bound}}
To prove Theorem~\ref{thm: approx prob bound}, we first obtain the following non-approximate result.
\begin{theorem}\label{app:thm:bound_exact}
The policy $\pi^{\text{BCTS}}_d(s)$ (see \eqref{def: pi bcts}) chooses a sub-optimal action with probability bounded by: 
\begin{equation}\label{eq:full_bound}
\Pr\left(\pi^{\text{BCTS}}_d(s) \notin \arg\max_a \Qod(s,a)\right)  \leq 
\left(1 + \frac{6\left(\Qod(s, \pi_o(s)) - \max_{a \neq \pi_o(s)}\Qod(s, a)\right)^2}{\gamma^{2d}\pi^2\left({(\sigma_o^{\text{GEV}})}^2+ {(\sigma_e^{\text{GEV}})}^2\right)}\right)^{-1}.
\end{equation}
\end{theorem}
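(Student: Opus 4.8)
The plan is to reduce the probability of choosing a suboptimal action to a tail event for a single GEV difference, and then apply a Chebyshev-type bound. First I would observe that, after applying the BCTS correction \eqref{eq: corrected q}, Theorem~\ref{thm:corrected_policy} tells us the corrected expected values are ordered exactly as the true $d$-step values $\Qod$. So assume without loss of generality that $a=\pi_o(s)$ is the truly optimal action, i.e. $\Delta := \Qod(s,\pi_o(s)) - \max_{a\ne\pi_o(s)}\Qod(s,a) > 0$; otherwise the bound is vacuous or the statement is symmetric. Under Assumptions~\ref{ass:NormQ} and \ref{ass:simpleComparison}, by Lemma~\ref{lem:Qdists} the event $\{\pi^{\text{BCTS}}_d(s)\ne\pi_o(s)\}$ is exactly the event that the (corrected) estimated value at the $\pi_o$-branch falls below the (corrected) estimated max over the other branches, namely
\begin{equation*}
  R_o(s) + \gamma^d G_o(s) \;<\; R_e(s) + \gamma^d G_e(s) - \gamma^d\bigl(B_e - B_o\bigr),
\end{equation*}
where $G_o \sim \text{GEV}(\mu^{\text{GEV}}_o(s),\sigma^{\text{GEV}}_o,0)$ and $G_e \sim \text{GEV}(\mu^{\text{GEV}}_e(s),\sigma^{\text{GEV}}_e,0)$ are independent (they live in disjoint sub-trees with independent leaf noise).

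Next I would define the scalar random variable $Z := \gamma^d\bigl(G_e(s) - G_o(s)\bigr)$, so that the failure event becomes $\{Z > (R_o - R_e) + \gamma^d(B_o - B_e) + \text{(the }\mu\text{-terms)}\}$; collecting terms and using the definition of $\Qod$ from \eqref{eq:dStepQ} together with Assumption~\ref{ass:simpleComparison}, the right-hand side simplifies to exactly $\E[Z] + \Delta$ — this is the content that makes the correction ``work'', and it is essentially a restatement of Theorem~\ref{thm:corrected_policy} at the level of the random variables rather than their expectations. Thus $\Pr(\text{failure}) = \Pr\bigl(Z - \E[Z] > \Delta\bigr)$. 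Now I would apply the one-sided Chebyshev (Cantelli) inequality: $\Pr(Z-\E[Z] > \Delta) \le \operatorname{var}(Z)/(\operatorname{var}(Z)+\Delta^2) = \bigl(1 + \Delta^2/\operatorname{var}(Z)\bigr)^{-1}$. Finally, since $G_o, G_e$ are independent with GEV shape parameter $0$, $\operatorname{var}(Z) = \gamma^{2d}\bigl(\operatorname{var}(G_o) + \operatorname{var}(G_e)\bigr) = \gamma^{2d}\cdot\frac{\pi^2}{6}\bigl((\sigma^{\text{GEV}}_o)^2 + (\sigma^{\text{GEV}}_e)^2\bigr)$, using the standard fact that a $\text{GEV}(\mu,\sigma,0)$ (Gumbel) variable has variance $\pi^2\sigma^2/6$. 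Substituting gives precisely \eqref{eq:full_bound}. The approximate bound of Theorem~\ref{thm: approx prob bound} then follows by plugging in the approximation $\sigma^{\text{GEV}} \approx \sigma/\sqrt{2d\log A}$ derived in the proof of Lemma~\ref{lem:approx_correction} (relation \eqref{eq: sigma e GEV approx}), which turns $\frac{\pi^2}{6}(\sigma^{\text{GEV}}_o)^2 \approx \frac{\pi^2 \sigma_o^2}{12 d\log A}$, and likewise for $\sigma_e$ (with $d-1$ replaced by $d$ to the leading order), yielding the $d\log A$ amplification factor in the denominator's reciprocal.

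The main obstacle I anticipate is not the Chebyshev step but the bookkeeping that reduces the multi-action event $\{\pi^{\text{BCTS}}_d(s)\notin\arg\max_a\Qod(s,a)\}$ to the single clean inequality $\Pr(Z-\E[Z]>\Delta)$: one must be careful that Assumption~\ref{ass:simpleComparison} collapses all $A^d - A^{d-1}$ competing leaves into a single GEV variable $G_e$ with a common mean, so that ``some suboptimal action beats $\pi_o$'' is genuinely equivalent to ``$\max$ over the off-policy sub-tree beats the on-policy sub-tree'', with no union bound needed. A secondary subtlety is verifying that the constant shift (the $R$'s, the $\mu$'s, and the correction $B_e - B_o$) recombines to leave exactly $\E[Z] + \Delta$ on the threshold side — this is where Lemma~\ref{lem:bias} and the exact form of the correction in \eqref{eq: corrected q} are used, and it must hold without the $A^{d-1}\gg 1$ approximation for the exact Theorem~\ref{app:thm:bound_exact}. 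Everything else (independence of $G_o,G_e$, the Gumbel variance formula, and the final algebraic substitution for the approximate version) is routine.
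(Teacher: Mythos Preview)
Your proposal is correct and follows essentially the same route as the paper's proof: define the difference between the off-policy and on-policy corrected estimators (your $Z$ is the paper's $X$ up to deterministic shifts), observe via Theorem~\ref{thm:corrected_policy}/Lemma~\ref{lem:bias} that the failure event is a one-sided deviation of this difference from its mean by exactly $\Delta$, apply Cantelli's inequality, and compute the variance using the Gumbel formula $\operatorname{var}=\pi^2\sigma^2/6$ together with independence of the two sub-trees. The paper makes the two-case split (whether $\pi_o(s)$ is optimal) more explicit, but your symmetry remark covers it since the bound depends only on $\Delta^2$.
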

\begin{proof}
We first recall Cantelli's inequality \cite{cantelli1929sui}: Let $X$ be a real-valued random variable. Then, for some $\lambda > 0,$ 
\begin{equation}
\label{eq: cantelli}
    \Pr \left(X - \E \left[ X \right] \geq \lambda \right) \leq \left(1 + \frac{\lambda^2}{\text{Var}[X]} \right)^{-1}.
\end{equation}
We choose 
$$
X := \max_{a \neq \pi_o(s)} \hat{Q}^{BCTS, \pi_o}_d(s, a) - \hat{Q}^{BCTS, \pi_o}_d(s, \pi_o(s)).
$$
We can now split the event of sub-optimal action choice as follows:
\begin{align}
& \{\pi^{\text{BCTS}}_d(s) \notin \arg\max_a \Qod(s,a)\} \nonumber \\
={} & \{{\pi_o(s) \in \arg\max_a \Qod(s,a)} \cap X > 0\} \bigcup \{{\pi_o(s) \notin \arg\max_a \Qod(s,a)} \cap X < 0 \}. \label{eq: union}
\end{align}
Note that this division is possible because according to Assumptions~\ref{ass:NormQ} and \ref{ass:simpleComparison}, all actions different than $\pi_o(s)$ have equal cumulative reward and value in expectation.

Next, we consider the two (deterministic) following possible  cases. \\
\textbf{Case I:} ${\pi_o(s) \in \arg\max_a \Qod(s,a).}$ Then, the second event in \eqref{eq: union} is an empty set and we have that 
\begin{equation}
\label{eq: first case}
 {\{\pi^{\text{BCTS}}_d(s) \notin \arg\max_a \Qod(s,a)\} = \{X > 0\}}.   
\end{equation}
\textbf{Case II:} ${\pi_o(s) \notin \arg\max_a \Qod(s,a)}.$ Then, from symmetry, we will get ${\{\pi^{\text{BCTS}}_d(s) \notin \arg\max_a \Qod(s,a)\} = \{X < 0\}}.$ 

In the rest of the proof, we shall apply Cantelli's inequality to upper bound $P(X>0)$ in Case~I, i.e., to bound the sub-optimal action selection event. Afterward, we will explain how Case~II yields the same bound as in Case~I.

Let us set
\begin{equation}
\label{eq: X exp}
    \lambda = - \E \left[ X \right] = \Qod(s, \pi_o(s)) - \max_{a \neq \pi_o(s)}\Qod(s, a),
\end{equation}
where the second relation follows from the proof of Theorem \ref{thm:corrected_policy}. Next, we calculate the variance of $X$ using GEV theory \cite{coles2001introduction}. For
    $G\sim \text{GEV}(\mu, \sigma, 0),$
    we have that
    $\text{Var}\left[G\right] = \sigma^2\frac{\pi^2}{6}.$
Then,
\begin{align}
    \text{Var}\left[X\right] &= \text{Var}\left[ \max_{a \neq \pi(s)} \hat{Q}^{BCTS, \pi_o}_d(s, a) - \hat{Q}^{BCTS, \pi_o}_d(s, \pi_o(s)) \right] \nonumber\\
    &= \text{Var}\left[ \max_{a \neq \pi(s)} \hat{Q}^{BCTS, \pi_o}_d(s, a)\right] + \text{Var}\left[ \hat{Q}^{BCTS, \pi_o}_d(s, \pi_o(s)) \right] \nonumber\\
    &= \text{Var}\left[ \max_{a \neq \pi(s)} \hQod(s, a)\right] + \text{Var}\left[ \hQod(s, \pi_o(s)) \right] \nonumber\\
    &= \frac{\gamma^{2d}\left(\sigma_e^{\text{GEV}}\right)^2\pi^2}{6} + \frac{\gamma^{2d}\left(\sigma_o^{\text{GEV}}\right)^2\pi^2}{6}, \label{eq: X var}
\end{align}
where the third relation is because $\text{Var} \left[\hat{Q}^{BCTS, \pi_o}_d(s, a)\right] = \text{Var}\left[\hQod(s,a)\right]$ following \eqref{eq: corrected q}; the last relation follows from $\hQod$ having a GEV distribution as given in Lemma~\ref{lem:Qdists}.

Plugging \eqref{eq: first case}, \eqref{eq: X exp}, and \eqref{eq: X var} into \eqref{eq: cantelli} gives the desired result for Case~I. Finally, for Case~II, we  define $Y=-X$ and repeat exactly the same process to upper bound $P(Y>0).$ Since $(\E[Y])^2=(\E[X])^2,$ and $\text{Var}[Y]=\text{Var}[X],$ we obtain exactly the same bound as in Case~I. This concludes the proof. 
\end{proof}

Theorem~\ref{thm: approx prob bound} now follows from Theorem~\ref{app:thm:bound_exact} after plugging the approximation \eqref{eq: sigma e GEV approx} from the proof of Lemma~\ref{lem:approx_correction} in \eqref{eq:full_bound}, and upper bounding the resulting expressions:
\begin{equation*}
    \sigma^{\text{GEV}}_e \approx \frac{\sigma_e}{\sqrt{2d\log A}} \leq \frac{\sigma_e}{\sqrt{d\log A}}, \quad\quad\quad \sigma^{\text{GEV}}_o \approx \frac{\sigma_o}{\sqrt{2(d-1)\log A}} \leq \frac{\sigma_o}{\sqrt{d\log A}}.
\end{equation*}

\subsection{Proof of Proposition \ref{prop:BellmanErrorVariance}}
\begin{proposition*}
    Let $\widehat{{\text{var}}}_n[X]$ be the variance estimator based on $n$ samples of X. Then,
     \begin{equation*}
     \widehat{{\text{var}}}_{n=2}[\hQo(s, a)] = {\left(\hQo_1(s, a) - \hQo_0(s, a)\right)^2} / {2} = {\delta^2(s,a)} / {2},
 \end{equation*}
 where $\delta(s,a)$ is the Bellman error.
 \end{proposition*}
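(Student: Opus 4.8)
The plan is to recognize this as a direct computation with the standard (unbiased, Bessel-corrected) sample variance estimator applied to exactly two data points. The key observation is that for $n=2$ samples $x_1, x_2$, the sample mean is $\bar{x} = (x_1+x_2)/2$, and the unbiased sample variance is
\[
\widehat{\text{var}}_{n=2}[X] = \frac{1}{n-1}\sum_{i=1}^{2}(x_i - \bar{x})^2 = (x_1 - \bar{x})^2 + (x_2 - \bar{x})^2.
\]
Since $x_1 - \bar{x} = (x_1 - x_2)/2$ and $x_2 - \bar{x} = (x_2 - x_1)/2$, each squared deviation equals $(x_1-x_2)^2/4$, so their sum is $(x_1-x_2)^2/2$.

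Concretely, I would first set $x_1 = \hQo_0(s,a)$ and $x_2 = \hQo_1(s,a)$ — that is, treat the root estimate $\hQo_0(s,a) = \hQo(s,a)$ and the one-step-lookahead estimate $\hQo_1(s,a) = r(s,a) + \gamma\max_{a'}\hQo(s', a')$ as two samples of the underlying random value $\hQo(s,a)$. Then substitute into the two-sample variance formula above to get $\left(\hQo_1(s,a) - \hQo_0(s,a)\right)^2/2$. Finally, I would invoke the definition of the Bellman error $\delta(s,a) := \hQo_1(s,a) - \hQo_0(s,a) = r(s,a) + \gamma\max_{a'}\hQo(s',a') - \hQo(s,a)$ to rewrite this as $\delta^2(s,a)/2$, completing the identity.

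There is essentially no obstacle here — the statement is an algebraic identity, and the only thing to be careful about is being explicit that $\widehat{\text{var}}_n$ refers to the Bessel-corrected ($n-1$ denominator) estimator rather than the maximum-likelihood ($n$ denominator) one; with the $n$-denominator convention one would instead obtain $\delta^2(s,a)/4$. The "content" of the proposition is conceptual rather than technical: it licenses using the squared Bellman error (which is readily available from the tree at depth $1$ for all actions) as a stand-in for $2\sigma_o^2$ and $2\sigma_e^2$ in the correction term of Lemma~\ref{lem:approx_correction}, which is why the paper notes that Assumptions~\ref{ass:NormQ} and \ref{ass:simpleComparison} are not needed for this step.
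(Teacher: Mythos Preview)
Your proposal is correct and essentially identical to the paper's own proof: both plug the two samples $\hQo_0(s,a)$ and $\hQo_1(s,a)$ into the Bessel-corrected sample variance formula, simplify each squared deviation to $(\hQo_0-\hQo_1)^2/4$, and sum to obtain $\delta^2(s,a)/2$. Your remark that the $n-1$ denominator is essential (versus the MLE convention) is a useful clarification that the paper leaves implicit.
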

\begin{proof}
 We shall employ the known unbiased variance estimator:
 \begin{align*}
     & \widehat{{\text{var}}}_{n=2}[\hQo(s, a)] \\ 
     = & {} \frac{1}{n-1}\sum_{d=0}^n\left(\hQo_d(s, a) - \frac{1}{n}\sum_{d=0}^n\hQo_d(s, a) \right)^2 \\
     \overset{(\mathrm{n=2})}{=} & {} \left(\hQo_0(s, a) - \frac{\hQo_0(s, a) + \hQo_1(s, a)}{2}\right)^2 + \left(\hQo_1(s, a) - \frac{\hQo_0(s, a) + \hQo_1(s, a)}{2}\right)^2 \\
     = & {} \frac{1}{4}\left(\hQo_0(s, a) - \hQo_1(s, a)\right)^2 + \frac{1}{4}\left(\hQo_1(s, a) - \hQo_0(s, a)\right)^2 \\
     = & {} \frac{\delta^2(s,a)}{2}
 \end{align*}
 \end{proof}

\subsection{Approximate bounds bias} \label{app:subsec:approx}
To show the validity of our approximation, we numerically evaluate the two sides of \eqref{eq: bias approx}, i.e. the LHS 
$$
B_e(\sigma_e, A, d) - B_o(\sigma_o, A, d)
$$
as given in \eqref{def: Be}, vs. the RHS
$$
\sqrt{2\log A}\left(\sigma_e\sqrt{d} - \sigma_o\sqrt{d-1}\right) - (\sigma_e - \sigma_o) / 2.
$$
To compute the values, we arbitrarily choose $A=3,\sigma_o=1,\sigma_e=4,$ and increase $d.$ We plot the results in Fig. \ref{fig: appx_bound_bias}. As seen, the approximation is highly accurate for all values of $A^d.$
\begin{figure}[H]
    \centering
    \includegraphics[scale=0.55]{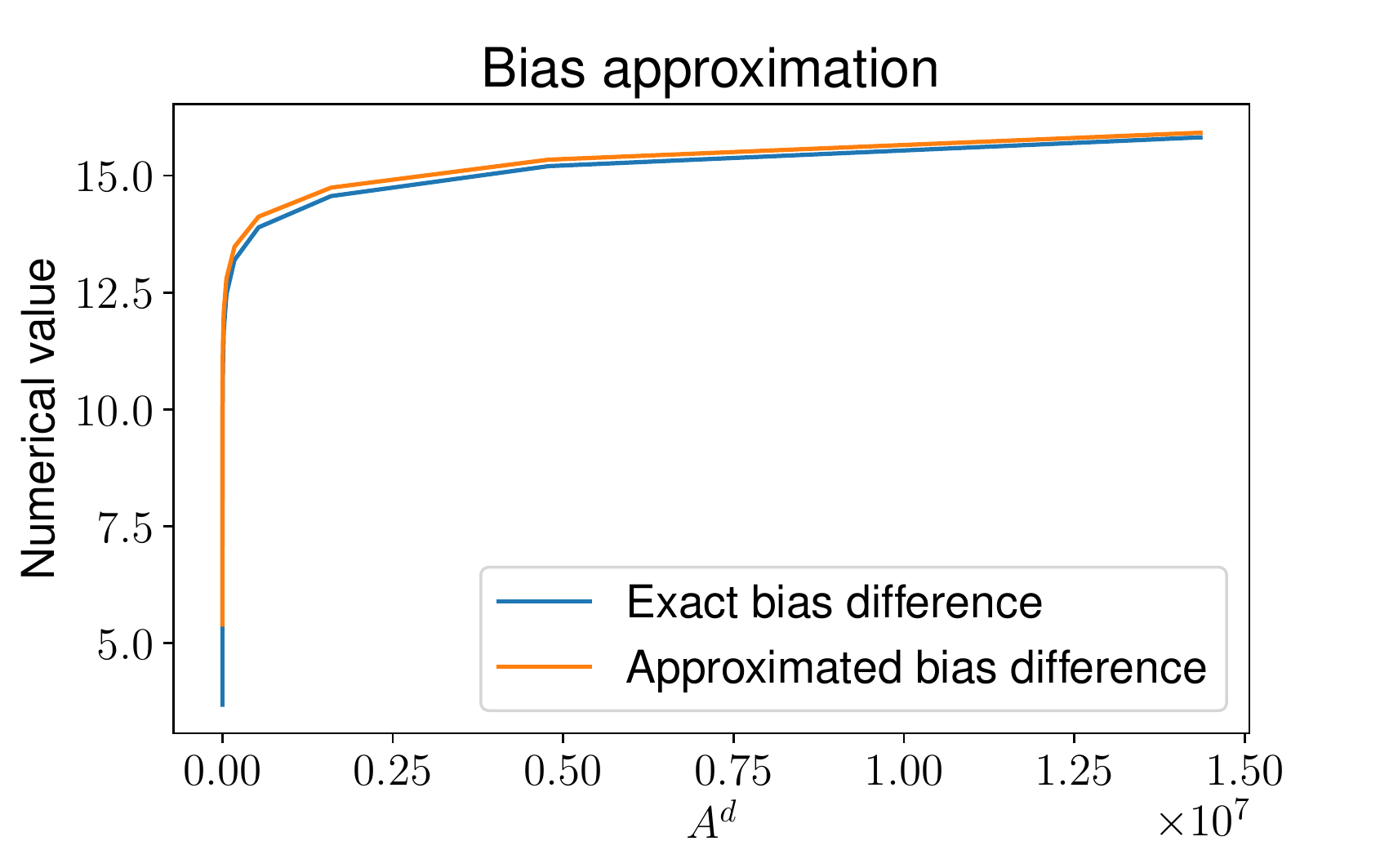}
    \caption{\textbf{Quality of approximation in Lemma~\ref{lem:approx_correction}.}}
    \label{fig: appx_bound_bias}
\end{figure}



\section{Experiments complementary material}\label{app:sec:experiments}

\subsection{Inference timing}
\label{app: inference timing}
We measure the average complete TS time per action selection. We provide the results together with the respective scores per game and depth in Fig.~\ref{fig: inference timing}. The scores are obtained via BCTS with a Batch-BFS implementation, as reported in Section~\ref{sec: inference experiments}. As depth increases, the number of explored nodes in the tree grows exponentially, with runtime increasing accordingly. The plots depict the tradeoff between improved scores and the corresponding price in terms of runtime per action selection.

\begin{figure}[H]
    \centering
    \includegraphics[scale=0.35]{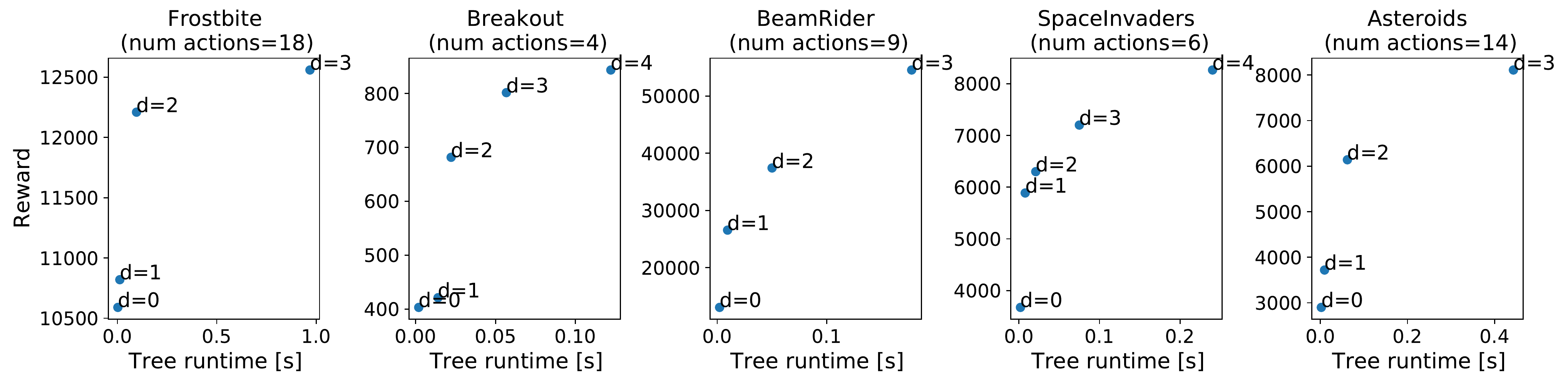}
    \caption{\textbf{Score vs. inference time of each TS operation as a function of TS depth.}}
    \label{fig: inference timing}
\end{figure}

\subsection{Training}
In our training experiments we use the same training hyper-parameters as the original DQN paper \cite{mnih2013playing}.

\subsection{Ablation study: Training with propagated value from the tree nodes}\label{app: ablation}
Here, we present an ablation study for the correction to the Bellman update proposed in \cite{efroni2018combine} in the case of a TS policy. This correction modifies the training target: Instead of bootstrapping the value from the transition sampled from the replay buffer, we use the cumulative reward and value computed during the TS. In Fig.~\ref{fig: ablation}, we present training plots for Atari Space-Invaders for DQN with TS of depths 2,3, and 4. Note that for depth 1, the correction is vacuous since it coincides with the classic Bellman update.
As seen, for Space-Invaders, the correction improves convergence in all tested depths. 
\begin{figure}[H] 
    \includegraphics[width=1.0\linewidth]{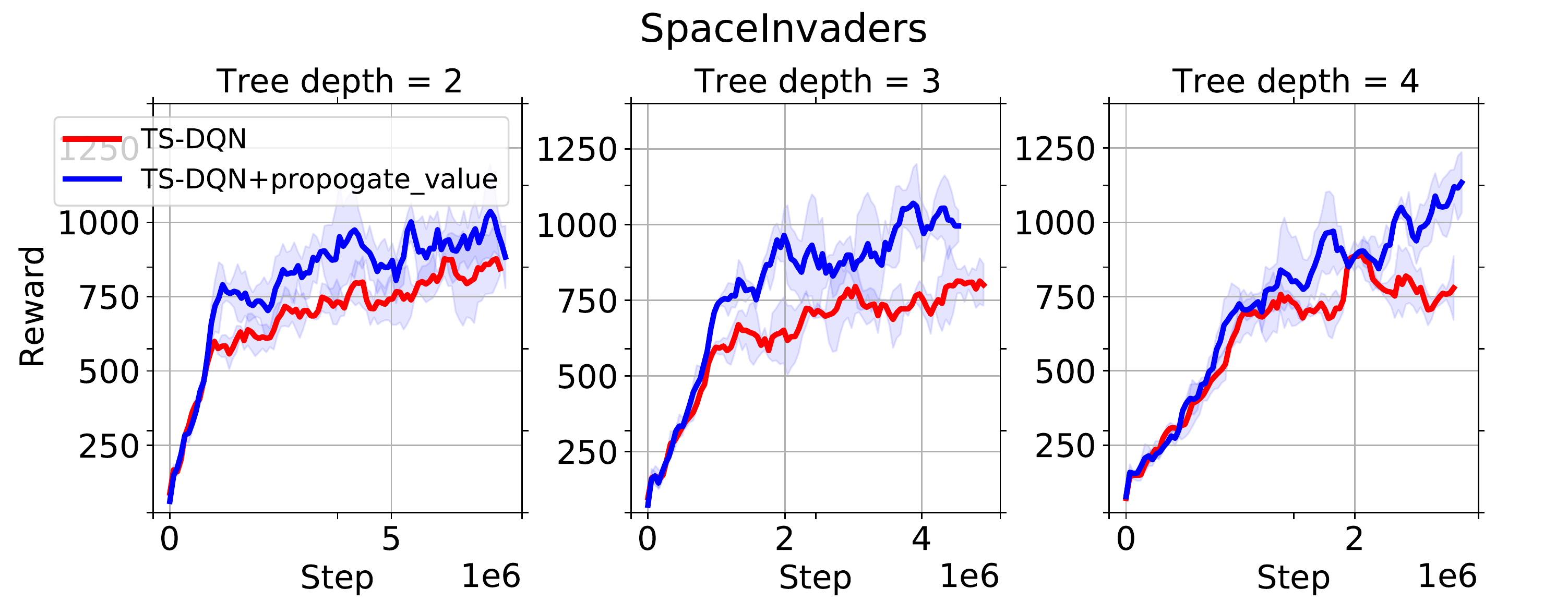}
    \caption{\textbf{Propagated value from the tree nodes: Ablation study for Space-Invaders convergence.} Episodic training cumulative reward of DQN with TS based on 5 seeds. We compare the standard update method with the update based on the propagated value from the tree nodes, as proposed in \cite{efroni2018combine}.}
    \label{fig: ablation}
\end{figure}
Lastly, we summarize the results for all tested games in Table~\ref{tab:train_full}. The table reveals that the correction often improves training, though not always. Therefore, we treat it as a hyper-parameter that we sweep over.
\begin{table}[!htbp]
  
  \centering
  \caption{\textbf{Ablation study: Propagated value (PV) from the tree nodes: Ablation study for scores of all tested games.} 
  All agents of all depths were trained using DQN with TS, with similar train time that amounts to $30$ million frames of DQN. The reported scores were obtained as in \cite{hessel2018rainbow}: Average of 200 testing episodes, from the agent snapshot that obtained the highest score during training. }
  
  \begin{tabular}{l|ll|ll|ll}
    \toprule
    \multirow{2}{*}{Game} & \multicolumn{2}{c|}{$d=2$} & \multicolumn{2}{c|}{$d=3$} & \multicolumn{2}{c}{$d=4$} \\ & PV=False & PV=True & PV=False & PV=True& PV=False & PV=True \\
    \midrule
    Asteroids &  $2,613$  & $2,472$& $4,794$ & $4,693$ &$17,929$    & $15,434$ \\
    Breakout & $568$& $581$ & $420$ & $417$& $620$   & $573$    \\
    MsPacman & $2,514$ & $2,923$& $3,498$ & $3,046$& $2,748$   & $4,021$    \\
    SpaceInvaders & $1,314$ & $1,602$& $1,204$ & $2,132$ & $1,452$   & $2,550$  \\
    VideoPinball & $214,168$ & $244,052$& $442,347$ & $366,670$ & $345,742$   & $301,752$    \\
    \bottomrule
  \end{tabular}
  \label{tab:train_full}
\end{table}
\section{Training: Wall-clock time}\label{app: wallclock exp}
We provide in Fig.~\ref{fig:train_convergence} convergence plots of DQN with TS for all tested Atari games, for depths $0$ to $4$. To showcase the time efficiency of using a tree-based policy, we give the scores with respect to the wall-clock time. We run each training experiment for two days, which amount to roughly 30 million frames for depth $0$ and 1 million frames for depth $4.$ For each run, we display the average score together with std using 5 seeds. 

Fig.~\ref{fig:train_convergence} reveals the trade-off between deeper TS inference (action selection) time and the improvement thanks to the deeper TS. For regular DQN (depth=0), each inference is the fastest, but generally leads to lower scores than deeper trees. On the other hand, the deepest tree is not necessarily the most time-efficient. In most cases here, there is a sweet-spot in depth 2 or 3 that gives the best score for the same training time.

\begin{figure}[H] 
    \centering
    \includegraphics[scale=0.35]{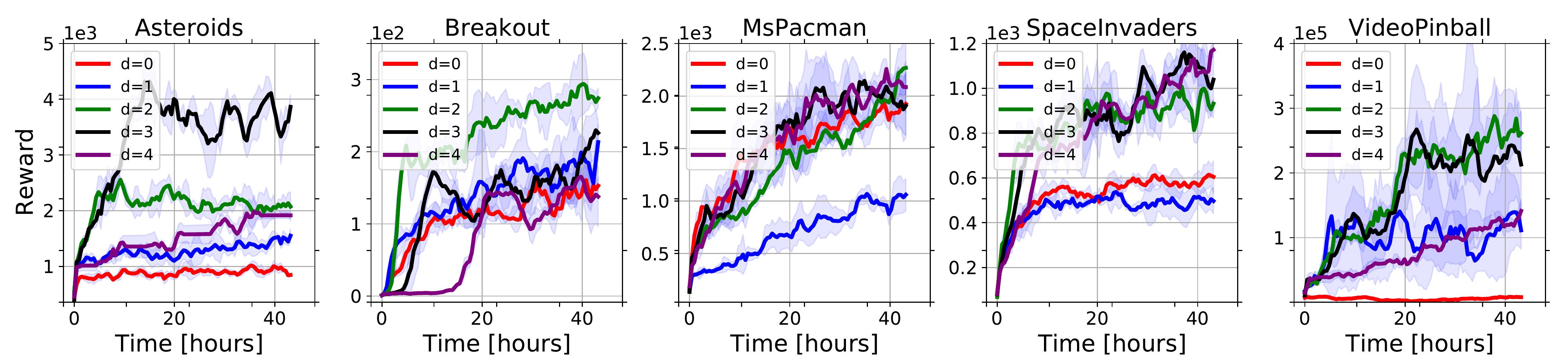}
    \caption{\textbf{Training convergence plots for tree search with DQN of depths up to $4$. }}
    \label{fig:train_convergence}
\end{figure}

\section{Hardware}\label{app: hardware}
We run our training experiments on a 10 core Intel(R) CPU i9-10900X @ 3.70GHz equipped with NVIDIA Quadro GV100 32GB. 
\end{document}